%\shortversion{\documentclass[10pt,letter]{article}}
\documentclass[10pt,a4paper]{article}

\newcommand{\shortversion}[1]{}
\newcommand{\longversion}[1]{#1}

\longversion{\usepackage[hscale=0.7,scale=0.75]{geometry}}

\shortversion{\usepackage{ijcai11}}
%\usepackage[normalleading,normaltitle,normallists]{savetrees}

%\longversion{\usepackage{a4wide}}

\usepackage{tikz}
%\usetikzlibrary{decorations}
%\usetikzlibrary{decorations.pumbs}
%\usetikzlibrary{decorations.mirror}

%\usepackage{mathptmx}
\shortversion{\usepackage{times}}

\longversion{\let\paragraph=\subsection}

\usepackage{amsmath}
\usepackage{amssymb}
\usepackage{amsthm}

\shortversion{\usepackage[belowskip=-6pt,aboveskip=5pt]{caption}}

\usepackage{tikz}
\usetikzlibrary{shapes,shapes.callouts,shadows,arrows,backgrounds,%
matrix,patterns,arrows,decorations.pathmorphing,decorations.pathreplacing,%
positioning,fit,calc,decorations.text%
}

\newtheorem{LEM}{Lemma} 
\newtheorem{THE}{Theorem} 
\newtheorem{COR}{Corollary}

\newtheorem{PRO}{Proposition} 
 
\theoremstyle{remark}
\newtheorem{EXA}{Example}

\def\hy{\hbox{-}\nobreak\hskip0pt} 

\newcommand{\SB}{\{\,}%
\newcommand{\SM}{\;{|}\;}%
\newcommand{\SE}{\,\}}%

\newcommand{\Card}[1]{|#1|}
\let\phi=\varphi
\let\epsilon=\varepsilon

\newcommand{\CCC}{\mathcal{C}}

\renewcommand{\P}{\text{\normalfont P}}
\newcommand{\NP}{\text{\normalfont NP}}

\newcommand{\PIC}{\Pi^\P_2}
\newcommand{\SIGC}{\Sigma^\P_2}
\newcommand{\coNP}{\text{\normalfont coNP}}

\newcommand{\IN}{\text{\scshape in}}
\newcommand{\OUT}{\text{\scshape out}}
\newcommand{\UND}{\text{\scshape und}}
\newcommand{\UD}{\text{\scshape ud}}
\newcommand{\DEF}{\text{\scshape def}}

\newcommand{\CA}{\text{\normalfont CA}}
\newcommand{\SA}{\text{\normalfont SA}}

\newcommand{\ACYC}{\text{\scshape Acyc}}
\newcommand{\BIP}{\text{\scshape Bip}}

\newcommand{\NOEVEN}{\text{\scshape Noeven}}

\newcommand{\SYM}{\text{\scshape Sym}}

\newcommand{\STB}{\text{\normalfont stb}}
\newcommand{\ADM}{\text{\normalfont adm}}
\newcommand{\PRF}{\text{\normalfont prf}}
\newcommand{\COM}{\text{\normalfont com}}

\newcommand{\SEM}{\text{\normalfont sem}}
\newcommand{\bsdist}{\text{\normalfont dist}}

\newcommand{\lext}[1]{#1^*}
\newcommand{\labFS}[2]{\textup{lab}(#1,#2)}

\newcommand{\citex}[1]{\citeauthor{#1}~\shortcite{#1}}

% Different font in captions
\newcommand{\captionfonts}{\small}
\makeatletter

\longversion{
  \def\leftcite{\@up[}\def\rightcite{\@up]}
  
  \def\cite{\def\citeauthoryear##1##2{\def\@thisauthor{##1}%
               \ifx \@lastauthor \@thisauthor \relax \else##1, \fi ##2}\@icite}
  \def\shortcite{\def\citeauthoryear##1##2{##2}\@icite}
  
  \def\citeauthor{\def\citeauthoryear##1##2{##1}\@nbcite}
  \def\citeyear{\def\citeauthoryear##1##2{##2}\@nbcite}
  
  % internal macro for citations with [] and with breaks between citations
  % used in \cite and \shortcite
  \def\@icite{\leavevmode\def\@citeseppen{-1000}%
   \def\@cite##1##2{\leftcite\nobreak\hskip 0in{##1\if@tempswa , ##2\fi}\rightcite}%
   \@ifnextchar [{\@tempswatrue\@citex}{\@tempswafalse\@citex[]}}
  % internal macro for citations without [] and with no breaks
  % used in \citeauthor and \citeyear
  \def\@nbcite{\leavevmode\def\@citeseppen{1000}%
   \def\@cite##1##2{{##1\if@tempswa , ##2\fi}}%
   \@ifnextchar [{\@tempswatrue\@citex}{\@tempswafalse\@citex[]}}
  
  % don't box citations, separate with ; and a space
  % also, make the penalty between citations a parameter, 
  % it may be a good place to break
  \def\@citex[#1]#2{%
    \def\@lastauthor{}\def\@citea{}%
    \@cite{\@for\@citeb:=#2\do
      {\@citea\def\@citea{;\penalty\@citeseppen\ }%
       \if@filesw\immediate\write\@auxout{\string\citation{\@citeb}}\fi
       \@ifundefined{b@\@citeb}{\def\@thisauthor{}{\bf ?}\@warning
         {Citation `\@citeb' on page \thepage \space undefined}}%
       {\csname b@\@citeb\endcsname}\let\@lastauthor\@thisauthor}}{#1}}
  
  % raise the brackets in bibliography labels
  \def\@biblabel#1{\def\citeauthoryear##1##2{##1, ##2}\@up{[}#1\@up{]}\hfill}
  
  \def\@up#1{\leavevmode\raise.2ex\hbox{#1}}
}

\shortversion{
\long\def\@makecaption#1#2{%
 \vskip\abovecaptionskip
 \sbox\@tempboxa{{\captionfonts #1: #2}}%
 \ifdim \wd\@tempboxa >\hsize
   {\captionfonts #1: #2\par}
 \else
   \hbox to\hsize{\hfil\box\@tempboxa\hfil}%
 \fi
 \vskip\belowcaptionskip}

 \def\section{\@startsection{section}{1}{\z@}{-5pt plus
     -3pt minus -2pt}{3pt plus 2pt minus 1pt}{\Large\bf\raggedright}}
 \def\subsection{\@startsection{subsection}{2}{\z@}{-5pt plus
     -2pt minus -2pt}{2pt plus 2pt minus 1pt}{\large\bf\raggedright}}
 \def\subsubsection{\@startsection{subparagraph}{3}{\z@}{-5pt plus
    -2pt minus -1pt}{1pt plus 1pt minus 1pt}{\normalsize\bf\raggedright}}
 \def\paragraph{\@startsection{paragraph}{4}{\z@}%
                                     {5pt \@plus1ex \@minus.2ex}%
                                     {-1em}%
                                     {\normalfont\normalsize\bfseries}}

% save space before and after theorems and proofs

\def\thm@space@setup{%
  \thm@preskip=4pt \thm@postskip=\thm@preskip}
\renewenvironment{proof}[1][\proofname]{%
  \par  \setlength{\topsep}{0pt}  \pushQED{\qed}%
  \normalfont %\topsep6\p@\@plus6\p@\relax  
  \trivlist  \item[\hskip\labelsep        \itshape
  #1\@addpunct{.}]\ignorespaces}{%  
  \popQED\endtrivlist\@endpefalse}
}
\makeatother

\title{Augmenting Tractable Fragments of Abstract
  Argumentation\thanks{Research funded by the ERC (COMPLEX REASON,
    239962).}}

\shortversion{\author{Sebastian Ordyniak \and Stefan Szeider\\
  Vienna University of Technology\\
  Vienna,  Austria\\
  ordyniak@kr.tuwien.ac.at, stefan@szeider.net}}

\longversion{\author{Sebastian Ordyniak and Stefan Szeider\\[4pt]
  Vienna University of Technology,
  Vienna,  Austria\\
    ordyniak@kr.tuwien.ac.at, stefan@szeider.net}}

\date{}

\begin{document}

\maketitle

\begin{abstract}
  We present a new and compelling approach to the efficient solution of
  important computational problems that arise in the context of abstract
  argumentation.  Our approach makes known algorithms defined for
  restricted fragments generally applicable, at a computational cost
  that scales with the distance from the fragment. Thus, in a certain
  sense, we gradually augment  tractable fragments.  Surprisingly, it
  turns out that some tractable fragments admit such an augmentation
  and that others do not.
  
  More specifically, we show that the problems of credulous and
  skeptical acceptance are fixed-parameter tractable when parameterized
  by the distance from the fragment of acyclic argumentation frameworks.
  Other tractable fragments such as the fragments of symmetrical and
  bipartite frameworks seem to prohibit an augmentation: the acceptance
  problems are already intractable for frameworks at distance 1 from the
  fragments.

  For our study we use a broad setting and consider several different
  semantics.  For the algorithmic results we utilize recent advances in
  fixed-parameter tractability.
%
%\emph{Keywords:} abstract argumentation, backdoors, computational
%complexity, parameterized complexity, fixed-parameter tractability
\end{abstract}

\section{Introduction}

The study of arguments as abstract entities and their interaction in
form of \emph{attacks} as introduced\\ by~\citex{Dung95} has become one of
the most active research branches within Artificial Intelligence, Logic
and Reasoning~\cite{BenchcaponDunne07,BesnardHunter08,RahwanSimari09}.
Abstract argumentation provides suitable concepts and formalisms to
study, represent, and process various reasoning problems most
prominently in defeasible reasoning (see,
e.g.,~\cite{Pollock92,BondarenkoDungKowalskiToni97}) and agent
interaction (see, e.g., \cite{ParsonsWooldridgeAmgoud03}).

A main issue for any argumentation system is the selection of acceptable
sets of arguments, called extensions. However, important computational
problems such as determining whether an argument belongs to some
extension (Credulous Acceptance) or all extensions (Skeptical
Acceptance), are intractable (see, e.g.,
\cite{DimopoulosTorres96,DunneBenchcapon02}). The significance of
efficient algorithms for these problems is evident. However, a few
tractable fragments are known where the acceptance problems can be
efficiently solved: the fragments of acyclic~\cite{Dung95}, symmetric
\cite{CostemarquisDevredMarquis05}, bipartite~\cite{Dunne07}, and noeven
\cite{DunneBenchcapon02} argumentation frameworks.

It seems unlikely that an argumentation framework originating from a
real-world application belongs to one of the known tractable fragments,
but it might be ``close'' to a tractable fragment. 
%As the distance we
%take the smallest number of arguments that must be deleted to put the
%framework into the tractable fragment under consideration. 
In this paper we study the natural and significant question of whether
we can solve the relevant problems efficiently for argumentation
frameworks that are of small distance to a tractable fragment. One would
certainly have to pay some extra computational cost that increases with
the distance from the tractable fragment, but ideally this extra cost
should scale gradually with the distance.

%\smallskip\noindent\textbf{Results} \
%\smallskip
\paragraph{Results} We show that the fragments of \emph{acyclic} and
\emph{noeven} argumentation frameworks admit an augmentation.  In
particular, we show that we can solve Credulous and Skeptical Acceptance
in polynomial time for argumentation frameworks that are of bounded
distance from either of the two fragments.  We further show that with
respect to the acyclic fragment, the order of the polynomial time bound
is independent of the distance, which means that both acceptance
problems are \emph{fixed-parameter tractable} (see
\cite{DowneyFellows99}) when parameterized by the distance from the
acyclic fragment.

In way of contrast, we show that the fragments of \emph{bipartite} and
\emph{symmetric} argumentation frameworks do not admit an
augmentation. In particular, we show that the problems Credulous and
Skeptical Acceptance are already \emph{intractable} (i.e., (co)NP-hard)
for argumentation frameworks at distance~$1$ from either of the two
fragments.

We further show that the parameter ``distance to the fragment of acyclic
frameworks'' is \emph{incomparable} with previously considered
parameters that also admit fixed-parameter tractable argumentation
\cite{Dunne07,DvorakSzeiderWoltran10}.  Hence our approach provides an
efficient solution for instances that are hard for known methods.

To get a broad picture of the complexity landscape we take several
popular semantics into consideration, namely admissible, preferred,
complete, semi-stable and stable semantics
(see \cite{BaroniGiacomin09}).

Our approach is inspired by the notion of ``backdoors'' which are
frequently used in the area of propositional satisfiability (see, e.g.,
\cite{WilliamsGomesSelman03,GottlobSzeider08,SamerSzeider08c}), and also
for quantified Boolean formulas and nonmonotonic reasoning
\cite{SamerSzeider09a,FichteSzeider11}.

\section{Preliminaries}

An \emph{abstract argumentation system} or \emph{argumentation
  framework} (\emph{AF}, for short) is a pair $(X,A)$ where $X$ is a
finite set of elements called \emph{arguments} and $A\subseteq X\times
X$ is a binary relation called \emph{attack relation}. If $(x,y)\in A$
we say that \emph{$x$ attacks $y$} and that $x$ is an \emph{attacker}
of~$y$.

An AF $F=(X,A)$ can be considered as a directed graph, and therefore it
is convenient to borrow notions and notation from graph theory.  For a
set of arguments $Y \subseteq X$ we denote by $F[Y]$ the AF $(Y,\SB
(x,y) \in A \SM x,y \in Y \SE)$ and by $F - Y$ the AF $F[X \setminus
Y]$. 
%For further notions and notations from graph theory we refer the
%reader to other sources~\cite{BangjensenGutin00}.

\begin{EXA}\label{exa:af}
  An AF with arguments $1,\dots,5$ and attacks 
  $(1,2)$, 
  $(1,4)$,   
  $(2,1)$,
  $(2,3)$, 
  $(2,5)$, 
  $(3,2)$, 
  $(3,4)$, 
  $(4,1)$, 
  $(4,2)$, 
  $(4,3)$, 
  $(5,4)$
  is displayed in Fig.~\ref{fig:ex-af}. \qed
\end{EXA}
Let $F=(X,A)$ be an AF, $S \subseteq X$ and $x \in X$. We say that $x$
is \emph{defended} (in $F$) by $S$ if for each $x' \in X$ such that
$(x',x) \in A$ there is an $x'' \in S$ such that $(x'',x') \in A$. We
denote by $S_F^+$ the set of arguments $x \in X$ such that either $x \in
S$ or there is an $x' \in S$ with $(x',x) \in A$, and we omit the
subscript if $F$ is clear from the context. We say $S$ is
\emph{conflict-free} if there are no arguments $x,x' \in S$ with $(x,x')
\in A$.

\begin{figure}[bh]
   \centering\small
  %\begin{tabular}{cc}
    \begin{tikzpicture}[xscale=0.5,yscale=0.5,>=stealth]
      \tikzstyle{every node}=[circle,draw,minimum size=4mm,inner sep=2pt]

      \begin{scope}[scale=1]
        \draw
        (0,0) node (p1) {$1$}
        (2,0) node (p2) {$2$}
        (2,-2) node (p3) {$3$}
        (0,-2) node (p4) {$4$}
        (1,1.6) node (p5) {$5$}
        ;
        
        \draw
        (p1) edge[->] (p2)
        (p2) edge[->] (p1)
        
        (p2) edge[->] (p3)
        (p3) edge[->] (p2)
        
        (p3) edge[->] (p4)
        (p4) edge[->] (p3)
        
        (p1) edge[->] (p4)
        (p4) edge[->] (p1)
        
        (p2) edge[->] (p5)
        (p5) edge[->,bend right=60] (p4)
        
        (p4) edge[->] (p2)
        ;
      \end{scope}
    \end{tikzpicture}\hspace{20mm}
   % &
    \begin{tikzpicture}[xscale=0.5,yscale=0.5,>=stealth]
      \tikzstyle{every node}=[circle,draw,minimum size=4mm,inner sep=2pt]

      \begin{scope}[scale=1]
        \draw
        (0,0) node[fill=lightgray] (p1) {$1$}
        (2,0) node (p2) {$2$}
        (2,-2) node[fill=lightgray] (p3) {$3$}
        (0,-2) node (p4) {$4$}
        (1,1.6) node[fill=lightgray] (p5) {$5$}
        ;
        
        \draw
        (p1) edge[->] (p2)
        (p2) edge[->] (p1)
        
        (p2) edge[->] (p3)
        (p3) edge[->] (p2)
        
        (p3) edge[->] (p4)
        (p4) edge[->] (p3)
        
        (p1) edge[->] (p4)
        (p4) edge[->] (p1)
        
        (p2) edge[->] (p5)
        (p5) edge[->,bend right=60] (p4)
        
        (p4) edge[->] (p2)
        ;
      \end{scope}
    \end{tikzpicture}
  %  \\
  %  (a)
  %  &
  %  (b)
  %\end{tabular}
 
    \caption{Left: the AF $F$ from Example~\ref{exa:af}. Right:
      indicated in gray the
      only non-empty complete extension of $F$.}
  \label{fig:ex-af}
\end{figure}

Next we define commonly used semantics of AFs, see the survey of
\citex{BaroniGiacomin09}.  We consider a semantics $\sigma$ as a mapping
that assigns to each AF $F=(X,A)$ a family $\sigma(F) \subseteq 2^X$ of
sets of arguments, called \emph{extensions}.  We denote by \ADM{},
\PRF{}, \COM{}, \SEM{} and \STB{} the \emph{admissible},
\emph{preferred}, \emph{complete}, \emph{semi-stable} and \emph{stable}
semantics, respectively.  These five semantics are characterized by the
following conditions which hold for each AF $F=(X,A)$ and each
conflict-free set $S\subseteq X$.
\begin{itemize}
\item $S \in \ADM{}(F)$ if each $s \in S$ is defended by $S$.
\item $S \in \PRF{}(F)$ if $S \in \ADM{}(F)$ and there is no $T \in
  \ADM{}(F)$ with $S \subsetneq T$.
\item $S \in \COM{}(F)$ if $S \in \ADM{}(F)$ and every argument that is
  defended by $S$ is contained in $S$.
% \item
%   $S \in \GRD{}(F)$ if $S \in \COM{}(F)$ and there is no $T \in \COM{}(F)$ with $T
%   \subsetneq S$.
\item
  $S \in \SEM{}(F)$ if $S \in \ADM{}(F)$ and there is no $T \in \ADM{}(F)$ with
  $S^+ \subsetneq T^+$.
\item $S \in \STB{}(F)$ if $S^+=X$.
% \item
%   $S \in \STG{}(F)$, if there is no conflict-free set $T \subseteq X$ with
%   $S^+ \subsetneq T^+$.
\end{itemize}
Let $F=(X,A)$ be an AF, $x \in X$ and $\sigma \in \{\ADM{}$, $\PRF{}$,
$\COM{}$, $\SEM{}$, $\STB{}\}$.  The argument $x$ is \emph{credulously
  accepted} in $F$ with respect to $\sigma$ if $x$ is contained in some
extension $S \in \sigma(F)$, and $x$ is \emph{skeptically accepted} in
$F$ with respect to $\sigma$ if $x$ is contained in all extensions $S
\in \sigma(F)$.

Each semantics $\sigma$ gives rise to the following two fundamental
computational problems: \textsc{$\sigma$-Credulous Acceptance} and
\textsc{$\sigma$-Skeptical Acceptance}, in symbols $\CA_\sigma$ and
$\SA_\sigma$, respectively.  Both problems take as instance an AF
$F=(X,A)$ together with an argument $x\in X$. Problem $\CA_\sigma$ asks
whether $F$ is credulously accepted in $F$, problem $\SA_\sigma$ asks
whether $F$ is skeptically accepted in $F$. Table~\ref{tab:complexity},
summarizes the complexities of these problems for the considered
semantics~(see \cite{DvorakWoltran10b}).

\begin{table}[t]
  \begin{center}
    \begin{tabular}{c|c|c}
      $\sigma$ & $\CA_\sigma$ & $\SA_\sigma$ \\
      \hline
      \ADM{} & \NP-complete & trivial \\
      \PRF{} & \NP-complete & $\PIC$-complete \\\
      \COM{} & \NP-complete & \P-complete \\
      \STB{} & \NP-complete & \coNP-complete \\
      \SEM{} & $\SIGC$-complete & $\PIC$-complete
    \end{tabular}
  \end{center}
  \caption{Complexity of credulous and skeptical acceptance for various
    semantics $\sigma$.}
\label{tab:complexity}
\end{table}

\begin{EXA}\label{exa:af-ex}
  Consider the AF $F$ from Example~\ref{exa:af} and the complete
  semantics (\COM{}). $F$ has two complete extensions $\emptyset$ and
  $\{1,3,5\}$, see Fig.~\ref{fig:ex-af}. Consequently, the
  arguments $1$, $3$ and $5$ are credulously accepted in $F$ and the
  arguments $2$ and $4$ are not. Furthermore, because of the complete
  extension $\emptyset$, no argument of $F$ is skeptically accepted.
  \qed
\end{EXA}                       

In the following we list classes of AFs for which CA and SA are known to
be solvable in polynomial time
\cite{Dung95,BaroniGiacomin09,CostemarquisDevredMarquis05,Dunne07}.
\begin{itemize}
\item $\ACYC$ is the class of \emph{acyclic} argumentation frameworks, i.e.,
  of AFs that do not contain  directed cycles.
\item $\NOEVEN$ is the class of \emph{noeven} argumentation frameworks,
  i.e., of AFs that do not contain directed cycles of even length.
\item $\SYM$ is the class of \emph{symmetric} argumentation frameworks,
  i.e., of AFs whose attack relation is symmetric.
\item $\BIP$ is the class of \emph{bipartite} argumentation frameworks,
  i.e., of AFs whose sets of arguments can be partitioned into two
  conflict-free sets.
\end{itemize}
%We say a class $\CCC$ of AFs is \emph{nice} if it can be checked in
%polynomial time whether $F \in \CCC$ for an AF $F$.
\begin{LEM}\label{lem:nice}
  The classes $\ACYC$, $\NOEVEN$, $\SYM$ and $\BIP$ can be recognized in
  polynomial time (i.e., given an AF $F$, we can decide in polynomial
  time whether $F$ belongs to any of the four classes).
\end{LEM}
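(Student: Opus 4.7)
The plan is to handle each of the four classes separately, since the recognition problems are essentially independent.

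First, for $\SYM$ I would simply iterate over all pairs $(x,y) \in A$ and check whether $(y,x) \in A$; this is quadratic in $|A|$ at worst. For $\BIP$ I would ignore the orientation of the attack relation and run a standard BFS/DFS two-colouring on the underlying undirected graph $(X,\SB\{x,y\} \SM (x,y)\in A\SE)$; this takes linear time and succeeds iff the vertex set admits a partition into two independent sets, which by definition is exactly the condition for $F \in \BIP$. For $\ACYC$ I would compute a topological ordering of $(X,A)$ via Kahn's algorithm (or equivalently test for a back edge in a DFS); the AF is acyclic iff this succeeds, again in linear time.

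The one genuinely non-trivial case, and the main obstacle, is $\NOEVEN$: deciding whether a directed graph contains a directed cycle of even length was a long-standing open problem. Here I would simply invoke the celebrated polynomial-time algorithm of Robertson, Seymour and Thomas (and, independently, McCuaig), which decides the existence of an even directed cycle in time polynomial in the size of the digraph. Applying this algorithm to the digraph $(X,A)$ and negating its answer settles membership in $\NOEVEN$.

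Combining the four cases gives the lemma. I expect the write-up to be short for the first three classes and to consist essentially of a citation for the fourth; no further analysis is needed because each individual algorithm is already known to run in polynomial time in the size of the input AF.
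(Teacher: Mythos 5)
Your proposal is correct and matches the paper's proof, which likewise treats $\ACYC$, $\BIP$ and $\SYM$ as easy observations and cites Robertson, Seymour and Thomas for $\NOEVEN$. You merely spell out the elementary algorithms for the first three classes in more detail than the paper does.
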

\begin{proof}
  The statement of the lemma is easily seen for the classes $\ACYC$,
  $\BIP$ and $\SYM$. For class $\NOEVEN$ it follows by a result of 
 \citex{RobertsonSeymourThomas99}.
%However, whether directed graphs without even
%  directed cycles can be recognized in polynomial time (which
%  corresponds to the recognition of $\NOEVEN$) was open for a long time,
%  until it was shown polynomial by
\end{proof}
Since the recognition and the acceptance problems are polynomial for
these classes, we consider them as ``tractable fragments of abstract
argumentation.''

\paragraph{Parameterized Complexity}
For our investigation we need to take two measurements into account: the
input size $n$ of the given AF $F$ and the distance~$k$ of $F$ from a
tractable fragment. The theory of \emph{parameterized complexity},
introduced and pioneered by \citex{DowneyFellows99}, provides the
adequate concept and tools for such an investigation. We outline the
basic notions of parameterized complexity that are relevant for this
paper, for an in-depth treatment we refer to other sources
\cite{FlumGrohe06,Niedermeier06}.

An instance of a parameterized problem is a pair $(I,k)$ where $I$ is
the \emph{main part} and $k$ is the \emph{parameter}; the latter is
usually a non-negative integer.  A parameterized problem is
\emph{fixed-parameter tractable} (FPT) if there exist a computable
function $f$ such that instances $(I,k)$ of size $n$ can be solved in
time $f(k)\cdot n^{O(1)}$.  Fixed-parameter tractable problems are also called
\emph{uniform polynomial-time tractable} because if $k$ is considered
constant, then instances with parameter $k$ can be solved in polynomial
time where the order of the polynomial is independent of $k$, in
contrast to \emph{non-uniform polynomial-time} running times such as
$n^{O(k)}$.  Thus we have three complexity categories for parameterized
problems: (1) problems that are fixed-parameter tractable (uniform
polynomial-time tractable), (2) problems that are non-uniform
polynomial-time tractable, and (3) problems that are $\NP$-hard or
$\coNP$-hard if the parameter is fixed to some constant (such as $k$-SAT
which is $\NP$-hard for $k=3$).

\paragraph{Backdoors}
We borrow and adapt the concept of backdoors from the area of
propositional satisfiability
\cite{WilliamsGomesSelman03,GottlobSzeider08,SamerSzeider08c}.  Let
$\CCC$ be a class of AFs, $F=(X,A)$ an AF, and $Y \subseteq X$.  We call
$Y$ a \emph{$\CCC$-backdoor for $F$} if $F - Y \in \CCC$.  We write
$\bsdist_\CCC(F)$ for the size of a smallest $\CCC$-backdoor for
$F$, i.e., $\bsdist_\CCC(F)$ represents the distance of $F$ from the
class $\CCC$. For an illustration see Fig.~\ref{fig:ex-af-bs}.

\begin{figure}[tb]
  \small\centering
  \noindent\shortversion{\hspace{-3mm}}\longversion{\hspace{2cm}}
      \begin{tikzpicture}[xscale=0.5,yscale=0.5,>=stealth]
      \tikzstyle{every circle node}=[circle,draw,minimum size=4mm,inner sep=2pt]

      \begin{scope}[scale=1]
        \draw
        (1,-3) node {$\ACYC$}
        (0,0) node[circle] (p1) {$1$}
        (2,0) node[circle,fill=lightgray] (p2) {$2$}
        (2,-2) node[circle] (p3) {$3$}
        (0,-2) node[circle,fill=lightgray] (p4) {$4$}
        (1,1.6) node[circle] (p5) {$5$}
        ;
        
        \draw
        (p1) edge[->] (p2)
        (p2) edge[->] (p1)
        
        (p2) edge[->] (p3)
        (p3) edge[->] (p2)
        
        (p3) edge[->] (p4)
        (p4) edge[->] (p3)
        
        (p1) edge[->] (p4)
        (p4) edge[->] (p1)
        
        (p2) edge[->] (p5)
        (p5) edge[->,bend right=60] (p4)
        
        (p4) edge[->] (p2)
        ;
      \end{scope}
    \end{tikzpicture}\hfill
    \begin{tikzpicture}[xscale=0.5,yscale=0.5,>=stealth]
      \tikzstyle{every  circle node}=[circle,draw,minimum size=4mm,inner sep=2pt]

      \begin{scope}[scale=1]
        \draw
        (1,-3) node {$\NOEVEN$}
        (0,0) node[circle,fill=lightgray] (p1) {$1$}
        (2,0) node[circle] (p2) {$2$}
        (2,-2) node[circle,fill=lightgray] (p3) {$3$}
        (0,-2) node[circle] (p4) {$4$}
        (1,1.6) node[circle] (p5) {$5$}
        ;
        
        \draw
        (p1) edge[->] (p2)
        (p2) edge[->] (p1)
        
        (p2) edge[->] (p3)
        (p3) edge[->] (p2)
        
        (p3) edge[->] (p4)
        (p4) edge[->] (p3)
        
        (p1) edge[->] (p4)
        (p4) edge[->] (p1)
        
        (p2) edge[->] (p5)
        (p5) edge[->,bend right=60] (p4)
        
        (p4) edge[->] (p2)
        ;
      \end{scope}
    \end{tikzpicture}\hfill
        \begin{tikzpicture}[xscale=0.5,yscale=0.5,>=stealth]
      \tikzstyle{every  circle node}=[circle,draw,minimum size=4mm,inner sep=2pt]

      \begin{scope}[scale=1]
        \draw
        (1,-3) node {$\BIP$}
        (0,0) node[circle] (p1) {$1$}
        (2,0) node[circle,fill=lightgray] (p2) {$2$}
        (2,-2) node[circle] (p3) {$3$}
        (0,-2) node[circle] (p4) {$4$}
        (1,1.6) node[circle] (p5) {$5$}
        ;
        
        \draw
        (p1) edge[->] (p2)
        (p2) edge[->] (p1)
        
        (p2) edge[->] (p3)
        (p3) edge[->] (p2)
        
        (p3) edge[->] (p4)
        (p4) edge[->] (p3)
        
        (p1) edge[->] (p4)
        (p4) edge[->] (p1)
        
        (p2) edge[->] (p5)
        (p5) edge[->,bend right=60] (p4)
        
        (p4) edge[->] (p2)
        ;
      \end{scope}
    \end{tikzpicture}\hfill
        \begin{tikzpicture}[xscale=0.5,yscale=0.5,>=stealth]
      \tikzstyle{every  circle node}=[circle,draw,,minimum size=4mm,inner sep=2pt]

      \begin{scope}[scale=1]
        \draw
        (1,-3) node {$\SYM$}
        (0,0) node[circle] (p1) {$1$}
        (2,0) node[circle,fill=lightgray] (p2) {$2$}
        (2,-2) node[circle] (p3) {$3$}
        (0,-2) node[circle] (p4) {$4$}
        (1,1.6) node[circle,fill=lightgray] (p5) {$5$}
        ;
        
        \draw
        (p1) edge[->] (p2)
        (p2) edge[->] (p1)
        
        (p2) edge[->] (p3)
        (p3) edge[->] (p2)
        
        (p3) edge[->] (p4)
        (p4) edge[->] (p3)
        
        (p1) edge[->] (p4)
        (p4) edge[->] (p1)
        
        (p2) edge[->] (p5)
        (p5) edge[->,bend right=60] (p4)
        
        (p4) edge[->] (p2)
        ;
      \end{scope}
    \end{tikzpicture}\longversion{\hspace{2cm}}
%     \\
%     ~~~(a)
%     &
%     ~~~~(b)
%     &
%     ~~~~(c)
%     &
%     ~~~~(d)
%   \end{tabular}
  \caption{Backdoors for the AF $F$ from Example~\ref{exa:af}, with
    respect to the indicated classes.}
%the classes (a) $\ACYC$, (b) $\NOEVEN$, (c) $\BIP$ and
%    (d) $\SYM$.  The arguments that are contained in the respective
%    backdoors are displayed in gray. }
  \label{fig:ex-af-bs}
\end{figure}
In the following we consider CA and SA parameterized by the distance to
a tractable fragment~$\CCC$.

\section{Tractability Results}

Regarding the fragments of acyclic and noeven argumentation frameworks we
obtain the following two results which show that these two fragments
admit an amplification.

\begin{THE}\label{the:fpt-k-acyc}
  The problems $\CA_\sigma$ and $\SA_\sigma$ are fixed-parameter
  tractable for parameter $\bsdist_\ACYC$ and the semantics $\sigma \in
  \{\ADM,\COM,\PRF,\SEM,\STB\}$.
\end{THE}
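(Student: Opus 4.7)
The plan is to reduce the computation to a brute-force enumeration over a small backdoor. First I would compute a smallest $\ACYC$-backdoor $Y$ of $F$; recognising such a set is precisely the directed feedback vertex set problem, so by the (now classical) FPT algorithm for that problem one obtains $Y$, with $|Y|=k=\bsdist_\ACYC(F)$, in time $f(k)\cdot n^{O(1)}$. All subsequent work will live on the acyclic DAG $F-Y$ together with guessed labels on $Y$.

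Next I would enumerate each of the $3^{|Y|}\le 3^k$ functions $\lambda\colon Y\to\{\IN,\OUT,\UND\}$. The key observation is that, since $F-Y$ is acyclic, a candidate complete labelling of $F$ extending $\lambda$ is uniquely determined by topological propagation: processing $X\setminus Y$ in a topological order of $F-Y$, every attacker of the current argument either lies in $Y$ (labelled by $\lambda$) or has already been processed, so the complete-labelling rules force a single label at each step. After propagation I would verify that these same rules also hold at every $y\in Y$; if so, $\lambda$ extends to a (unique) complete labelling of $F$, otherwise it extends to none. The resulting list therefore contains all complete labellings of $F$ and has size at most $3^k$.

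The five semantics are then decided by polynomial postprocessing on this list. $\STB$-labellings are those with no $\UND$; $\PRF$- and $\SEM$-labellings are the ones maximal in the list with respect to, respectively, the $\IN$-set and the set $S^+$; and credulous acceptance under $\ADM$ coincides with credulous acceptance under $\COM$, while skeptical acceptance under $\ADM$ is trivial (the empty set is always admissible). The target argument is credulously (resp.\ skeptically) accepted iff it is labelled $\IN$ in some (resp.\ every) $\sigma$-labelling of the list, giving an overall running time of $f(k)\cdot n^{O(1)}$.

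The main obstacle I expect is justifying the ``uniquely determined'' claim in the enumeration step and, for $\PRF$ and $\SEM$, confirming that maximality can indeed be decided inside the size-$3^k$ family rather than over all of $2^X$. Both points rest on the standard tractability of acyclic AFs, extended to incorporate fixed boundary labels on $Y$; the only genuinely non-trivial import is the off-the-shelf use of the directed feedback vertex set algorithm to obtain~$Y$ itself.
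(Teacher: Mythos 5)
Your proposal is correct and follows essentially the same route as the paper: an FPT directed-feedback-vertex-set computation for backdoor detection, enumeration of the $3^k$ three-valued labelings of the backdoor, deterministic propagation through the acyclic remainder (your topological propagation is exactly the paper's Rules 1--3, which necessarily label every argument when $F-Y$ is acyclic), and polynomial postprocessing by maximality for $\PRF$/$\SEM$, emptiness of $\UND$ for $\STB$, and the reduction of $\ADM$ to $\COM$. The only difference is cosmetic: the paper phrases the candidate set via partial labelings and a containment lemma ($\COM(F)\subseteq\COM^*(F,B)$) followed by a completeness test on each candidate, and wraps the argument in a more general ``fully tractable class'' framework so that the same machinery also covers $\NOEVEN$, whereas you verify the labeling conditions directly on $Y$.
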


\begin{THE}\label{the:xp-k-noeven}
  The problems $\CA_\sigma$ and $\SA_\sigma$ are solvable in non-uniform
  polynomial-time for parameter $\bsdist_\NOEVEN$ and the semantics
  $\sigma \in \{\ADM,\COM,\PRF,\SEM,\STB\}$.
\end{THE}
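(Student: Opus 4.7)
The plan is to turn each instance of $\CA_\sigma$ or $\SA_\sigma$ into an enumerate-then-reduce procedure whose overall running time is $n^{O(k)}$, where $k=\bsdist_\NOEVEN(F)$. The three conceptual steps are (i) finding a small $\NOEVEN$-backdoor of $F$, (ii) enumerating the possible local behaviours of the sought extension on that backdoor, and (iii) reducing, for each guess, the residual problem on $F-Y$ to a polynomial-time solvable problem on a noeven AF. For step (i), since Lemma~\ref{lem:nice} tells us that $\NOEVEN$ is polynomial-time recognisable, we simply enumerate all subsets $Y\subseteq X$ of each size $k'=0,1,2,\dots$ in turn and stop at the first $Y$ for which $F-Y\in\NOEVEN$; this uses at most $n^{O(k)}$ trials and recovers both $k$ and a backdoor of that size.

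For step (ii), with $Y$ fixed, I iterate over all pairs $(S_Y,R_Y)$ with $S_Y\subseteq Y$ and $R_Y\subseteq Y$, where the intended reading is $S_Y = S\cap Y$ and $R_Y = S^+\cap Y$ for the sought extension $S$. There are at most $4^k$ such pairs, and for each I first check local consistency inside $F[Y]$: that $S_Y$ is conflict-free, that every argument of $R_Y\setminus S_Y$ is attacked by $S_Y$, and that every $y\in Y$ attacking some $s\in S_Y$ lies in $R_Y$ (so that there is a chance of defending $S_Y$). This transforms the global problem into the question of whether $S_Y$ can be completed by some $S'\subseteq X\setminus Y$ into a $\sigma$-extension of $F$ with prescribed interface on $Y$.

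For step (iii), note that $F-Y$ is in $\NOEVEN$ by choice of the backdoor, and deleting further vertices preserves that property since removing vertices cannot create cycles. I therefore delete from $F-Y$ (a) all arguments attacked by $S_Y$, which are forced out of $S$, and (b) all arguments that attack some element of $S_Y$ but are not attacked by $S_Y$, which cannot lie in $S'$; the surviving attackers of $S_Y$ exactly correspond to the obligations that $S'$ must meet, which I encode by a bounded number of gadget vertices attached at the boundary in such a way that no new even directed cycle is introduced. In the resulting noeven AF the problem of deciding whether there is a $\sigma$-extension $S'$ consistent with the guess and with acceptance (or non-acceptance) of $x$ is then answered by the polynomial-time algorithms known for $\NOEVEN$ semantics~\cite{Dunne07,DunneBenchcapon02}. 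The main obstacle will be handling the semantics whose definition contains a global maximality condition, namely $\PRF$ and $\SEM$: here the purpose of the second component $R_Y$ of the guess is precisely to localise the ``$S^+$'' that maximality refers to, so that after fixing $R_Y$ one only has to compare candidate completions on the noeven side; verifying that the gadget construction faithfully transports these maximality conditions and preserves the noeven property is the technical crux. Putting the three steps together yields a bound of $n^{O(k)}\cdot 4^{O(k)}\cdot n^{O(1)} = n^{O(k)}$, which is the claimed non-uniform polynomial-time algorithm.
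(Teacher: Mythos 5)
Your steps (i) and (ii) match the paper: brute-force enumeration of subsets gives the backdoor in $n^{O(k)}$ time (the paper's Proposition~\ref{pro:xp-bsd}), and your guess $(S_Y,R_Y)$ is essentially the paper's $3^k$ partial labelings $\lambda:B\to\{\IN,\OUT,\UND\}$. The gap is in step (iii). The paper does \emph{not} reduce each guess to an acceptance query on a modified noeven AF; it \emph{propagates} the labeling through the whole framework via deterministic rules (an argument becomes \OUT{} if some attacker is \IN, \IN{} if all attackers are \OUT, \UND{} if all attackers are \OUT{} or \UND{} with at least one \UND), and then uses the key structural fact (Lemma~\ref{lem:acyc-noeven-ft}, via Dunne and Bench-Capon) that in the residual noeven framework every surviving argument lies on a directed cycle, so its only admissible set is $\emptyset$. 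Hence each guess yields exactly one explicit candidate set $\IN(\lext{\lambda})$, and one filters the $\le 3^k$ candidates by directly testing which are complete extensions. Your proposal instead leaves the residual problem as ``find a $\sigma$-extension of a noeven AF subject to boundary constraints encoded by gadgets,'' but the known polynomial-time results for $\NOEVEN$ concern acceptance of a single argument, not constrained extension existence, and you give no construction of gadgets that encode the obligations while provably creating no even cycle. You yourself flag this as ``the technical crux,'' which is another way of saying the proof is missing precisely where it is needed.

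The second, independent problem is your treatment of $\PRF$ and $\SEM$. Maximality of $S$ (with respect to $\subseteq$ or to $S^+$) quantifies over \emph{all} admissible supersets $T$, and such a $T$ need not agree with $S$ on the backdoor $Y$: $T\cap Y$ may strictly contain $S_Y$, i.e., $T$ lives in a \emph{different} guess. So fixing $(S_Y,R_Y)$ does not localise the maximality check to the noeven side, and ``comparing candidate completions'' within a single guess cannot certify preferredness. The paper avoids this entirely: since $\PRF(F),\SEM(F),\STB(F)\subseteq\COM(F)$ and $\COM(F)$ has been computed as an explicit list of at most $3^k$ sets, the maximality and totality conditions are checked globally by pairwise comparison across that list. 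Without either this global comparison or a correct localisation argument, your algorithm is not established for $\PRF$ and $\SEM$ (and the filtering that guarantees soundness of the enumerated candidates is also not addressed).
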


\noindent The remainder of this section is devoted to a proof of
Theorems~\ref{the:fpt-k-acyc} and \ref{the:xp-k-noeven}.

The solution of the acceptance problems involves two tasks:
(i)~\emph{Backdoor Detection}: to find a $\CCC$\hy backdoor $B$ for $F$
of size at most $k$.  (ii)~\emph{Backdoor Evaluation}: to use the
$\CCC$\hy backdoor $B$ for $F$ for deciding whether $x$ is
credulously/skeptically accepted in $F$.

For backdoor detection we utilize recent results from fixed-parameter
algorithmics. For backdoor evaluation we introduce and use the new
concept of partial labelings.

\paragraph{Backdoor  Detection}
The following lemma gives an easy upper bound for the complexity of
detecting a $\CCC$\hy backdoor for any class $\CCC$ of AFs that can be
recognized in polynomial time.
\begin{PRO}\label{pro:xp-bsd}
  Let $\CCC$ be a class of AFs that can be recognized in polynomial time
  and $F=(X,A)$ an AF with $\bsdist_\CCC(F)\leq k$.  Then a
  $\CCC$-backdoor for $F$ of size at most~$k$ can be found in time
  $|X|^{O(k)}$ and hence in non-uniform polynomial-time for
  parameter~$k$.
\end{PRO}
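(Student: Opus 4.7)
The plan is a straightforward brute-force enumeration combined with the polynomial-time recognition assumption, so the proof should be short and the main work is in bookkeeping the running time.

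First I would observe that if $\bsdist_\CCC(F)\le k$, then by definition there exists some $Y\subseteq X$ with $|Y|\le k$ such that $F-Y\in\CCC$. Thus it suffices to iterate over all subsets $Y\subseteq X$ of size at most $k$ in some fixed order, compute the subframework $F-Y$, and test membership in $\CCC$; by the assumption on $\CCC$, each such test runs in polynomial time in $|X|$, say time $|X|^{O(1)}$. As soon as a witness $Y$ is found, output it.

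Next I would bound the number of subsets considered. The total number of subsets of $X$ of size at most $k$ is
\[
\sum_{i=0}^{k}\binom{|X|}{i} \;\le\; (k+1)\binom{|X|}{k} \;=\; |X|^{O(k)}.
\]
Combining this with the polynomial cost of each recognition test and of constructing $F-Y$ yields a total running time of $|X|^{O(k)}\cdot|X|^{O(1)} = |X|^{O(k)}$. If $k$ is regarded as a constant (the parameterized, non-uniform viewpoint), this is polynomial in $|X|$, which is exactly the claimed non-uniform polynomial-time bound for parameter~$k$.

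There is essentially no obstacle here: the only subtlety is that this running time is \emph{non-uniform} polynomial (the exponent depends on $k$), so it does not by itself establish fixed-parameter tractability; it only places the backdoor detection task in $\XP$. This is precisely the distinction the paper emphasizes, and the stronger FPT bound needed later for $\ACYC$ will have to come from a dedicated algorithm rather than from this naive enumeration.
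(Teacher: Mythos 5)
Your proposal is correct and is essentially identical to the paper's own proof: brute-force enumeration of all subsets of size at most $k$ (of which there are $|X|^{O(k)}$), testing each candidate $B$ by checking $F-B\in\CCC$ in polynomial time. The closing remark distinguishing this $\XP$-style bound from fixed-parameter tractability accurately reflects the role the proposition plays in the paper.
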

\begin{proof}
  To find a $\CCC$-backdoor for $F$ of size at most $k$ we simply check
  for every subset $B \subseteq X$ of size $\leq k$ whether $F-B \in
  \CCC$.  There are $O(\Card{X}^{k})$ such sets and each check can be
  carried out in polynomial time.
\end{proof}
Together with Lemma~\ref{lem:nice} we obtain the following consequence of
Proposition~\ref{pro:xp-bsd}.
\begin{COR}
  Let $\CCC \in \{\ACYC,\NOEVEN,\SYM,\BIP\}$  and $F=(X,A)$ an AF with
  $\bsdist_\CCC(F)\leq k$.  Then a $\CCC$-backdoor for~$F$ of size at most
  $k$ can be found in time $|X|^{O(k)}$ and hence in non-uniform
  polynomial-time for parameter~$k$.
\end{COR}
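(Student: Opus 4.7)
The plan is to observe that this corollary is an immediate consequence of combining the two results already established in this section, namely Lemma~\ref{lem:nice} and Proposition~\ref{pro:xp-bsd}. There is no new algorithmic content to introduce; the proof is purely a composition of hypotheses.

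First I would invoke Lemma~\ref{lem:nice}, which states that each of the four classes $\ACYC$, $\NOEVEN$, $\SYM$, $\BIP$ admits a polynomial-time recognition algorithm. This is exactly the precondition required by Proposition~\ref{pro:xp-bsd}, so the classes listed in the corollary fit into its framework without modification. I would then plug any such $\CCC$ into Proposition~\ref{pro:xp-bsd}, which guarantees that a $\CCC$-backdoor for $F$ of size at most $k$ can be located in time $|X|^{O(k)}$ by brute-force enumeration of all subsets of $X$ of size at most $k$ and a polynomial-time membership test on $F-B$ for each candidate~$B$.

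Finally I would note that $|X|^{O(k)}$ matches the definition of non-uniform polynomial-time tractability for parameter~$k$ as recalled earlier in the preliminaries, which yields the second half of the statement. Since nothing in the corollary requires a tighter bound than that of Proposition~\ref{pro:xp-bsd}, no additional case analysis per class is needed: the four fragments enter the argument uniformly through Lemma~\ref{lem:nice}. There is no genuine obstacle; the only thing to be careful about is to make explicit that Lemma~\ref{lem:nice} covers \emph{all} four classes, so the conclusion applies uniformly to every $\CCC \in \{\ACYC,\NOEVEN,\SYM,\BIP\}$.
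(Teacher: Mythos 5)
Your proposal is correct and matches the paper exactly: the corollary is stated there as an immediate consequence of Proposition~\ref{pro:xp-bsd} combined with Lemma~\ref{lem:nice}, with no additional argument given. Nothing is missing.
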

It is a natural question to ask whether the above result can be improved
to uniform-polynomial time. We get an affirmative answer for three of
the four classes under consideration.
\begin{LEM}\label{lem:fpt-bsd}
  Let $\CCC \in \{\ACYC,\SYM,\BIP\}$ and $F=(X,A)$ an AF with
  $\bsdist_\CCC(F)\leq k$.  Then the detection of a $\CCC$-backdoor for~$F$ of
  size at most~$k$ is fixed-parameter tractable for parameter $k$.
\end{LEM}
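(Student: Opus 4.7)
The plan is to reduce each of the three backdoor-detection problems to a well-studied graph problem known to admit an FPT algorithm, and then invoke the corresponding result.

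For the acyclic case, observe that removing a set $Y$ from an AF $F=(X,A)$ yields a member of $\ACYC$ precisely when $Y$ hits every directed cycle of the digraph $(X,A)$. Thus $\ACYC$-backdoors of size at most $k$ are exactly directed feedback vertex sets of size at most $k$ in the digraph underlying $F$, and one can invoke the celebrated result of Chen, Liu, Lu, O'Sullivan, and Razgon that \emph{Directed Feedback Vertex Set} is FPT in the solution size.

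For the symmetric case, associate to $F=(X,A)$ the undirected graph $H$ with edge set $\SB \{x,y\} \SM (x,y)\in A \text{ xor } (y,x)\in A \SE$, i.e., the pairs witnessing asymmetry. A set $Y\subseteq X$ is a $\SYM$-backdoor for $F$ iff it contains at least one endpoint of every edge of $H$, i.e., iff $Y$ is a vertex cover of $H$. Detecting a $\SYM$-backdoor of size at most $k$ is thus equivalent to \textsc{Vertex Cover} parameterized by solution size, which is a classical FPT problem (e.g., via the simple branching algorithm running in time $O(2^k\cdot(|X|+|A|))$).

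For the bipartite case, a set $S\subseteq X$ is conflict-free in $F$ iff it is independent in the underlying undirected graph $G$ of $F$ (with edge $\{x,y\}$ whenever $(x,y)\in A$ or $(y,x)\in A$). Hence $F-Y\in\BIP$ iff $G-Y$ is bipartite, so $\BIP$-backdoors of size at most $k$ are exactly odd cycle transversals of $G$ of size at most $k$. Fixed-parameter tractability follows from the result of Reed, Smith, and Vetta that \emph{Odd Cycle Transversal} is FPT in the solution size.

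The main conceptual obstacle is the acyclic case, since it depends on the much deeper FPT algorithm for Directed Feedback Vertex Set; the bipartite and symmetric reductions rely only on standard FPT results. The remaining work is just to verify that each reduction preserves solution size exactly and is computable in polynomial time, which is immediate from the constructions above.
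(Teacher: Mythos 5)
Your proposal is correct and follows essentially the same route as the paper: the authors likewise reduce $\ACYC$-, $\BIP$-, and $\SYM$-backdoor detection to Directed Feedback Vertex Set, Odd Cycle Transversal, and Vertex Cover, respectively, citing the same FPT results. You merely spell out the intermediate graph constructions (the asymmetry graph and the underlying undirected graph) in more detail than the paper's one-line equivalences.
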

\begin{proof}
  The detection of $\ACYC$-backdoors is easily seen to be equivalent
  to the so-called directed feedback vertex set problem which has
  recently been shown to be fixed-parameter tractable by
  \citex{ChenLiuLuOsullivanRazgon08}. Similarly, the detection of
  $\BIP$-backdoors is equivalent to the problem of finding an odd
  cycle traversal which is fixed-parameter tractable due to a result of
  \citex{ReedSmithVetta04}. Finally, the detection of a $\SYM$-backdoor
  set is equivalent to the vertex cover problem which is well known to
  be fixed-parameter tractable~\cite{DowneyFellows99}.
\end{proof}
We must leave it open whether the detection of $\NOEVEN$-backdoors
of size at most $k$ is fixed-parameter tractable for parameter $k$.  Since
already the polynomial-time recognition of $\NOEVEN$ is highly
nontrivial, a solution for the backdoor problem seems very
challenging.  However, it is easy to see that $\CCC$-backdoor
detection, considered as a non-parameterized problem, where $k$ is just
a part of the input, is $\NP$-complete for $\CCC \in
\{\ACYC,\NOEVEN,\SYM,\BIP\}$. Hence it is unlikely that
Lemma~\ref{lem:fpt-bsd} can be improved to a polynomial-time result.

\paragraph{Backdoor Evaluation}
Let $F=(X,A)$ be an AF. A \emph{partial labeling} of $F$, or
\emph{labeling} for short, is a function $\lambda: Y \rightarrow
\{\IN,\OUT,\UND\}$ defined on a subset $Y$ of $X$.  Partial labelings
generalize \emph{total} labelings which are defined on the entire set
$X$ of arguments \cite{ModgilCaminada09}.

We denote by $\IN(\lambda)$, $\OUT(\lambda)$ and $\UND(\lambda)$ the
sets of arguments $x \in X$ with $\lambda(x)=\IN$, $\lambda(x)=\OUT$ and
$\lambda(x)=\UND$ respectively. Furthermore, we set $\DEF(\lambda)=Y$
and $\UD(\lambda)=X \setminus \DEF(\lambda)$ and denote by
$\lambda_\emptyset$ the the \emph{empty labeling}, i.e., the labeling
with $\DEF(\lambda_\emptyset)=\emptyset$.  For a set $S \subseteq X$ we
define $\labFS{F}{S}$ to be the \emph{labeling of $F$ with respect to
  $S$} by setting $\IN(\labFS{F}{S})=S$, $\OUT(\labFS{F}{S})=S^+
\setminus S$ and $\UND(\labFS{F}{S})=X \setminus S^+$. We say a set $S
\subseteq X$ is \emph{compatible} with a labeling $\lambda$ if
$\lambda(x)=\labFS{F}{S}(x)$ for every $x \in \DEF(\lambda)$.

% We denote by $\COM(F,\lambda)$ the set of all complete extensions in
% $\COM(F)$ that are compatible with $\lambda$. Furthermore, we define
% $\COM(F,B) = \bigcup_{\lambda:B\rightarrow \{\IN,\OUT,\UND\}}
% \COM(F,\lambda)$ for a subset $B$ of~$X$.

% \begin{LEM}\label{lem:backdoor} 
%   Let $F=(X,A)$ be an AF and $B \subseteq X$. Then $\COM(F)=\COM(F,B)$.
% \end{LEM}
% \begin{proof}
%   The inclusion $\COM(F,B) \subseteq \COM(F)$ follows directly from the
%   definitions.  To see the reverse inclusion, let $S \in \COM(F)$ and
%   let $\lambda$ be the unique labeling of the arguments in $B$ that is
%   compatible with $S$. We conclude that $S \in \COM(F,\lambda)$.
% \end{proof}

% \begin{COR}\label{cor:com-fpt-b}
%   Let $F=(X,A)$ be an AF and $B \subseteq X$. If for every labeling
%   $\lambda$ with $\DEF(\lambda)=B$ the set $\COM(F,\lambda)$ can be computed
%   in polynomial time, then the computation of $\COM(F)$ is fixed-parameter
%   tractable for parameter $|B|$.
% \end{COR}
% \begin{proof}
%   We can obtain $\COM(F)$ by at most $3^{\Card{B}}$ polynomial-time
%   computations.
% \end{proof}

% We show next how $\COM(F,\lambda)$ can be computed
% by means of a simple propagation procedure.

Let $F=(X,A)$ be an AF and
 $\lambda$ a partial labeling of~$F$. 
The \emph{propagation} of $\lambda$ with respect to
$F$, denoted $\lext{\lambda}$, is the labeling that is obtained from $\lambda$
by initially setting $\lext{\lambda}(x)=\lambda(x)$, for every $x \in
\DEF(\lambda)$, and subsequently applying one of the following three rules to 
unlabeled arguments $x \in X$ as long as possible.

\medskip \emph{Rule~1}.  $x$ is labeled $\OUT$ if $x$ has at least one
attacker that is labeled $\IN$.

\emph{Rule 2}.  $x$ is labeled $\IN$ if all attackers of $x$ are labeled
$\OUT$.

\emph{Rule 3}.  $x$ is labeled $\UND$ if all attackers of $x$ are
either labeled $\OUT$ or $\UND$ and at least one attacker of $x$ is
labeled $\UND$.  \medskip

\noindent It is easy to see that $\lext{\lambda}$ is well-defined and unique.

For an AF~$F$, a set $B$ of arguments of $F$ and a partial labeling
$\lambda$ of $F$ we set: 
\shortversion{
\smallskip
\noindent$
\begin{array}{lcl}
\COM^*(F,\lambda)&\!\!=\!\!&\SB \IN(\lext{\lambda}) \cup S \SM S
\in \ADM(F - \DEF(\lext{\lambda})) \SE;\\
\COM^*(F,B)&\!\!=\!\!&\bigcup_{\lambda : B \rightarrow
  \{\IN,\OUT,\UND\}}\COM^*(F,\lambda).
\end{array}$}

\longversion{
$$
\begin{array}{lcl}
\COM^*(F,\lambda)&\!\!=\!\!&\SB \IN(\lext{\lambda}) \cup S \SM S
\in \ADM(F - \DEF(\lext{\lambda})) \SE;\\[4pt]
\COM^*(F,B)&\!\!=\!\!&\bigcup_{\lambda : B \rightarrow
  \{\IN,\OUT,\UND\}}\COM^*(F,\lambda).
\end{array}$$}

%\smallskip
% \begin{LEM}\label{lem:noeven-cycle-adm-empty}
%   The classes $\ACYC$ and $\NOEVEN$ are enumerable.
%   Let $F$ be an AF, $B$ a $\NOEVEN$\hy backdoor for $F$ and $\lambda$ a
%   partial labeling of the arguments in $B$. Then $\ADM(F -
%   \DEF(\lext{\lambda}))=\{\emptyset\}$.
% \end{LEM}
% \begin{proof}
%   It is shown in \cite{DunneBenchcapon01} that if $F'$ is an AF with $F' \in
%   \NOEVEN$ and every argument of $F'$ is contained in at least one directed
%   cycle then $\ADM(F')=\{\emptyset\}$. Because $B$ is a $\NOEVEN$\hy backdoor
%   for $F$ and $\lambda$ a partial labeling of the arguments in $B$ it follows
%   that $F - \DEF(\lambda) \in \NOEVEN$ and it hence only remains to show that
%   every argument of $F - \DEF(\lext{\lambda})$ is contained in at least one
%   directed cycle in $F - \DEF(\lext{\lambda})$. To see this it suffices to
%   show that every argument $x$ of $F - \DEF(\lext{\lambda})$ has at least one
%   in-neighbor in $F - \DEF(\lext{\lambda})$. Suppose not, i.e., there is an
%   argument $x \in X \setminus \DEF(\lext{\lambda})$ with no in-neighbor in $F -
%   \DEF(\lext{\lambda})$. It follows that every in-neighbor of $x$ must
%   be labeled and hence $x \in \DEF(\lext{\lambda})$, a contradiction.
% \end{proof}

\longversion{The following lemmas illustrate the connection between partial labelings and
complete extensions.}
\shortversion{The following lemma can be established by induction on the number of
arguments that have been labeled according to Rules 1--3.}
\begin{LEM}\label{prop:pe-le}
  Let $F=(X,A)$ be an AF, $\lambda$ a partial labeling of $F$, and $S$ a
  complete extension that is compatible with $\lambda$. Then the
  propagation $\lext{\lambda}$ of $\lambda$ is compatible with $S$.
\end{LEM}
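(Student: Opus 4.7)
The plan is to proceed by induction on the number of propagation rule applications used to construct $\lext{\lambda}$ from $\lambda$. Write $\lambda = \lambda_0, \lambda_1, \lambda_2, \dots, \lambda_m = \lext{\lambda}$, where $\lambda_{i+1}$ is obtained from $\lambda_i$ by a single application of Rule~1, Rule~2, or Rule~3 to some argument $x_{i+1}$. The induction hypothesis is that $S$ is compatible with $\lambda_i$; the base case $i=0$ holds by assumption. For the inductive step, I only have to verify that the new label assigned to $x_{i+1}$ agrees with $\labFS{F}{S}(x_{i+1})$.

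For each of the three rules, this verification uses a different defining property of a complete extension. If Rule~1 fires, then $x_{i+1}$ has an attacker $x'$ with $\lambda_i(x')=\IN$, so by the induction hypothesis $x' \in S$. Conflict-freeness of $S$ forces $x_{i+1} \notin S$, and $x' \in S$ attacking $x_{i+1}$ puts $x_{i+1} \in S^+ \setminus S$, so $\labFS{F}{S}(x_{i+1}) = \OUT$, as required. If Rule~2 fires, all attackers of $x_{i+1}$ are labeled $\OUT$ in $\lambda_i$, hence by induction each such attacker lies in $S^+ \setminus S$, i.e.\ is itself attacked by $S$. Thus $S$ defends $x_{i+1}$, and since $S$ is complete every argument defended by $S$ lies in $S$, giving $x_{i+1} \in S$ and $\labFS{F}{S}(x_{i+1}) = \IN$.

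The slightly more delicate case is Rule~3. All attackers of $x_{i+1}$ carry label $\OUT$ or $\UND$ in $\lambda_i$, and at least one attacker $x^*$ is $\UND$. By induction no attacker of $x_{i+1}$ is in $S$, so $x_{i+1} \notin S^+ \setminus S$. To finish I need $x_{i+1} \notin S$; here I argue by contradiction. If $x_{i+1} \in S$, then completeness (admissibility of $S$) requires every attacker of $x_{i+1}$ to be attacked by $S$, which by the induction hypothesis would force every attacker label to be $\OUT$, contradicting the existence of the $\UND$-attacker $x^*$. Thus $x_{i+1} \notin S^+$ and $\labFS{F}{S}(x_{i+1}) = \UND$, completing the induction.

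I do not expect any real obstacle: the argument is a routine case analysis, and the only subtlety is remembering that Rule~3 needs the admissibility clause of a complete extension (not merely conflict-freeness) to rule out the possibility $x_{i+1} \in S$. Well-definedness of $\lext{\lambda}$, already noted after the definition, ensures that the order in which the rules are applied is immaterial, so the induction really does establish the lemma for $\lext{\lambda}$.
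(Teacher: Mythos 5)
Your proof is correct and follows essentially the same route as the paper's: induction on the number of rule applications, with the compatibility assumption as base case and a three-way case analysis matching Rule~1 to conflict-freeness, Rule~2 to the completeness condition, and Rule~3 to admissibility. The only (cosmetic) difference is in the Rule~2 case, where you invoke the completeness condition directly while the paper takes a slightly longer detour through admissibility of $S \cup \{x\}$; both arguments are sound.
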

\longversion{
\begin{proof} 
  We show the claim by induction on the number of arguments that have
  been labeled according to Rules 1--3. Because $S$ is compatible with
  $\lambda$ it holds that $\lext{\lambda}(x)=\lambda(x)=\labFS{F}{S}(x)$
  for every $x \in \DEF(\lambda)$ and hence the proposition holds before
  the first argument has been labeled according to one of the
  rules. Now, suppose that $\lambda'$ is the labeling that is obtained
  from $\lambda$ after labeling the first $i$ arguments according to one
  of the rules and that $x$ is the $i+1$-th argument that is labeled
  according to the rules. We distinguish three cases.

  First we assume that $x$ is labeled according to Rule~1.  In this case
  $\lext{\lambda}(x)=\OUT$ and we need to show that $x \in S^+ \setminus
  S$. It follows from the definition of Rule~1 that $x$ has at least one
  attacker $n$ with $\lambda'(n)=\IN$.  Using the induction hypothesis
  it follows that $\labFS{F}{S}(n)=\IN$ and hence $n \in S$. Because $S$
  is conflict-free it follows that $x \notin S$ but since $x$ is
  attacked by $n$ it follows that $x \in S^+ \setminus S$.

%  The cases where $x$ is labeled according to Rules~2 and 3 are similar.
  Second we assume that $x$ is labeled according to Rule~2.  In this
  case $\lext{\lambda}(x)=\IN$ and we need to show that $x \in S$. Let
  $n_1,\dotso,n_r$ be all the attackers of $x$ in $F$. It follows from
  the definition of Rule~2 that $\lambda'(n_j)=\OUT$ for every $1 \leq j
  \leq r$. Using the induction hypothesis it follows that
  $\labFS{F}{S}(n_j)=\OUT$ and hence $n_j \in S^+ \setminus S$ for every
  $1 \leq j \leq r$. It follows that no out-neighbor of $x$ can be
  contained in~$S$ otherwise this out-neighbor would be attacked by~$x$
  but~$x$ cannot be defended by $S$. Hence $S \cup \{x\}$ is also
  admissible.  Because $x$ is defended by $S$ it follows that $x \in S$.

  Finally, we assume that $x$ is labeled according to Rule~3.  In this
  case $\lext{\lambda}(x)=\UND$ and we need to show that $x \notin
  S^+$. Using the definition of Rule~3 it follows that the set of all
  attackers of $x$ can be partitioned into two sets $U$ and $O$ such
  that $\lambda'(u)=\UND$ for every $u \in U$ and $\lambda'(o)=\OUT$ for
  every $o \in O$ and $U \neq \emptyset$. Using the induction hypothesis
  it follows that $\lambda'(n)=\labFS{F}{S}(n)$ for every $n \in U \cup
  O$. Hence, no attacker of $x$ belongs to $S$ and so $x$ cannot be
  contained in $S^+ \setminus S$. Furthermore, because $S$ is admissible
  and $x$ has an attacker that is not contained in $S^+$ it follows
  that $x$ cannot be contained in $S$. Hence, $x$ is not contained in
  $S^+$.
\end{proof}
}
\shortversion{The following lemma can be shown using Lemma~\ref{prop:pe-le}.}
\begin{LEM}\label{pro:ext-pe}
  Let $F=(X,A)$ be an AF and $B \subseteq X$.
  Then $\COM(F) \subseteq \COM^*(F,B)$.
\end{LEM}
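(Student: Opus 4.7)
The plan is to show that every complete extension $S$ of $F$ can be recovered as $\IN(\lext{\lambda}) \cup S'$ for the right choice of $\lambda \colon B \to \{\IN,\OUT,\UND\}$ and some admissible set $S'$ of the reduced AF $F - \DEF(\lext{\lambda})$. The natural candidate for $\lambda$ is the restriction of $\labFS{F}{S}$ to~$B$; then $S$ is compatible with $\lambda$ by construction, so Lemma~\ref{prop:pe-le} gives that $\lext{\lambda}$ is also compatible with $S$. This immediately forces $S \cap \DEF(\lext{\lambda}) = \IN(\lext{\lambda})$, which suggests taking $S' := S \setminus \IN(\lext{\lambda}) = S \setminus \DEF(\lext{\lambda})$, so that $S'$ lives in $X \setminus \DEF(\lext{\lambda})$ as required.

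The bulk of the argument is then to verify that this $S'$ is admissible in $F - \DEF(\lext{\lambda})$. Conflict-freeness is inherited from $S$. For defense, pick $x \in S'$ and an attacker $x'$ of $x$ in $F - \DEF(\lext{\lambda})$; since $S$ is admissible in $F$, there is some $x'' \in S$ that attacks $x'$, and it remains to show that $x'' \in S'$, i.e., that $x'' \notin \DEF(\lext{\lambda})$. The key point is that if $x''$ were labeled $\IN$ by $\lext{\lambda}$, then Rule~1 would force $\lext{\lambda}(x') = \OUT$ (since propagation is applied exhaustively), contradicting $x' \notin \DEF(\lext{\lambda})$. Since $x'' \in S$ and compatibility of $\lext{\lambda}$ with $S$ implies $S \cap \DEF(\lext{\lambda}) \subseteq \IN(\lext{\lambda})$, this indeed rules out $x'' \in \DEF(\lext{\lambda})$, so $x'' \in S'$ defends $x$ against $x'$ inside $F - \DEF(\lext{\lambda})$.

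The main (though modest) obstacle is precisely this defense step: it crucially requires that propagation has been carried out to exhaustion, so that any argument still unlabeled by $\lext{\lambda}$ has no in-attackers labeled $\IN$. Once that observation is in place, the three set-theoretic containments ($S' \subseteq X \setminus \DEF(\lext{\lambda})$, conflict-freeness, defense) fall out immediately, yielding $S = \IN(\lext{\lambda}) \cup S' \in \COM^*(F,\lambda) \subseteq \COM^*(F,B)$ as desired.
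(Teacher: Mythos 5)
Your proof is correct and follows essentially the same route as the paper's: choose $\lambda$ as the restriction of $\labFS{F}{S}$ to $B$, invoke Lemma~\ref{prop:pe-le} to get $S\cap\DEF(\lext{\lambda})=\IN(\lext{\lambda})$, and verify admissibility of $S\setminus\DEF(\lext{\lambda})$ in $F-\DEF(\lext{\lambda})$ via the observation that a defender labeled $\IN$ would force the attacker to be labeled $\OUT$ by Rule~1. The paper phrases the defense step as a contradiction while you argue directly that the defender survives into the reduced framework, but the underlying argument is identical.
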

\longversion{
\begin{proof}
  Let $F=(X,A)$ be the given AF, $B \subseteq X$ and $S \in \COM(F)$.
  We show that $S \in \COM^*(F,\lambda)=\SB \IN(\lext{\lambda}) \cup S \SM S
  \in \ADM(F-\DEF(\lext{\lambda})) \SE$ for the unique 
  partial labeling $\lambda$ defined on $B$ that is compatible with $S$.
  We set $S_1=S \cap \DEF(\lext{\lambda})$, $S_2=S \setminus S_1$,
  % $F_1=F[\DEF(\lext{\lambda})]$ 
  and $F_2=F - \DEF(\lext{\lambda})$.
  
  It follows from Lemma~\ref{prop:pe-le} that
  $S_1=\IN(\lext{\lambda})$. It remains to show that $S_2$ is admissible
  in $F_2$.  Clearly, $S_2$ is conflict-free. To see that $S_2$ is
  admissible suppose to the contrary that there is an argument $x \in
  S_2$ that is not defended by $S_2$ in $F_2$, i.e., $x$ has an attacker
  $y$ in $F_2$ that is not attacked by an argument in $S_2$. Because $S$
  is a complete extension of $F$ the argument $x$ is defended by $S$ in
  $F$. Hence, there is a $z \in S_1 = S_1=\IN(\lext{\lambda})$ that
  attacks $y$.  But then, using rule Rule~1, $\lext{\lambda}(y)=\OUT$,
  and hence $y$ cannot be an argument of $F_2$.  Hence $S_2$ is
  admissible in $F_2$.
\end{proof}
}

For an AF $F$ we set $\lext{F}=F-\DEF(\lext{\lambda_\emptyset})$. In other
words, $\lext{F}$ is obtained from $F$ after deleting all arguments from $F$
that, starting from the empty labeling, are labeled according to the
Rules~1--3. We observe that because we start from the empty labeling 
Rule~3 will not be invoked.

We say a class $\CCC$ of AFs is \emph{fully tractable} if (i)~for every
$F \in \CCC$ the set $\ADM(\lext{F})$ can be computed in polynomial
time, and (ii)~$\CCC$ is closed under the deletion of arguments, i.e.,
if $F=(X,A) \in \CCC$ and $Y\subseteq X$, then $F-Y \in \CCC$.

% The next theorem shows that if $\CCC$ is a fully tractable class of
% AFs then the computation of the sets $\COM(F)$, $\PRF(F)$, $\SEM(F)$
% and $\STB(F)$ is fixed-parameter tractable for parameter $k$, if a
% $\CCC$\hy backdoor for $F$ of size $k$ is provided as input.
\begin{THE}\label{lem:bs-sig-fpt-acyc-noeven}
  Let $\CCC$ be a fully tractable class of AFs, $F=(X,A)$ an AF and $B$
  a $\CCC$\hy backdoor for $F$ with $\Card{B}\leq k$.  Then the
  computation of the sets $\COM(F)$, $\PRF(F)$, $\SEM(F)$ and $\STB(F)$
  can be carried out in time $3^{k}|X|^{O(1)}$ and is therefore
  fixed-parameter tractable for parameter~$k$.
\end{THE}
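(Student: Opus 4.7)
The plan is to enumerate all $3^{|B|}$ partial labelings of $B$, use propagation to reduce the problem on $F$ to one on an AF in $\CCC$, and then exploit full tractability of $\CCC$ to handle the residual instance in polynomial time.

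For each labeling $\lambda : B \to \{\IN, \OUT, \UND\}$, I would first compute the propagation $\lext{\lambda}$ in polynomial time and set $F_\lambda = F - \DEF(\lext{\lambda})$. Since $B \subseteq \DEF(\lext{\lambda})$ and $\CCC$ is closed under deletion of arguments, $F_\lambda \in \CCC$. To be able to apply full tractability, I would prove the following key claim: starting from the empty labeling of $F_\lambda$ none of the Rules~1--3 fires, so $\lext{F_\lambda} = F_\lambda$, and in particular $\ADM(F_\lambda) = \ADM(\lext{F_\lambda})$ can be enumerated in polynomial time. Rules~1 and~3 cannot fire initially since both require an already-labeled neighbor. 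For Rule~2, if it applied to some $x \in F_\lambda$, then all attackers of $x$ in $F$ would lie in $\DEF(\lext{\lambda})$; a case analysis on their labels (any IN would have forced Rule~1 on $x$; all OUT would have forced Rule~2 on $x$; otherwise Rule~3 would have fired on $x$) shows that $x$ itself would belong to $\DEF(\lext{\lambda})$, contradicting $x \in F_\lambda$.

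Having enumerated $\ADM(F_\lambda)$ for each $\lambda$, I would form candidate sets $T = \IN(\lext{\lambda}) \cup S$ for every $S \in \ADM(F_\lambda)$. By Lemma~\ref{pro:ext-pe}, $\COM(F) \subseteq \bigcup_\lambda \COM^*(F,\lambda)$, so every complete extension appears among these candidates. Not every candidate need be admissible in $F$ (the guessed $\lambda$ may be inconsistent, or an UND-labeled argument in $\DEF(\lext{\lambda})$ may attack some $s \in S$ without being counter-attacked), so I would filter each candidate in polynomial time and retain $T$ exactly when it is a complete extension of $F$, i.e., $T$ is admissible in $F$ and every argument defended by $T$ lies in $T$. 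This yields $\COM(F)$, from which $\PRF(F)$, $\SEM(F)$ and $\STB(F)$ are derived in polynomial post-processing by keeping the $\subseteq$-maximal members, the members with $\subseteq$-maximal $T^+$, and the members with $T^+ = X$, respectively.

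The total running time is $3^k$ labelings, each followed by a polynomial-time propagation, a polynomial-size enumeration of $\ADM(F_\lambda)$, and polynomial-time candidate filtering and post-processing, giving $3^k \cdot |X|^{O(1)}$ overall. The main obstacle I expect is the ``no-propagation in $F_\lambda$'' claim above, since this is the bridge that lets us invoke the polynomial-time enumeration of $\ADM(\cdot)$ guaranteed by full tractability, which a priori only speaks about $\ADM(\lext{F})$ rather than $\ADM(F)$. Once this claim is secured, the remainder of the argument is a careful but routine filtering and aggregation step.
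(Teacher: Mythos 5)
Your proposal is correct and follows essentially the same route as the paper's proof: enumerate the $3^{k}$ labelings of $B$, propagate, use full tractability on $F-\DEF(\lext{\lambda})$, collect the candidates $\IN(\lext{\lambda})\cup S$, filter via Lemma~\ref{pro:ext-pe} by testing completeness in $F$, and post-process for $\PRF$, $\SEM$, $\STB$. In fact you supply an explicit argument for the identity $\lext{(F-\DEF(\lext{\lambda}))}=F-\DEF(\lext{\lambda})$, which the paper asserts without proof (though the same idea appears in its proof of Lemma~\ref{lem:acyc-noeven-ft}).
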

\begin{proof}
  Let $\CCC$ be a fully tractable class of AFs, $F=(X,A)$ an AF and $B$
  a $\CCC$\hy backdoor for $F$ with $\Card{B}\leq k$. We first show that
  the computation of $\COM(F)$ is fixed-parameter tractable for
  parameter~$k$. Let $\lambda$ be one of the $3^{k}$ partial labelings
  of $F$ defined on $B$. We first show that we can compute
  $\COM^*(F,\lambda)=\SB \IN(\lext{\lambda}) \cup S \SM S \in \ADM(F -
  \DEF(\lext{\lambda})) \SE$ in polynomial time, i.e., in time
  $\Card{X}^{O(1)}$. Clearly, we can compute the propagation
  $\lext{\lambda}$ of $\lambda$ in polynomial time. Furthermore, because
  $F-B \in \CCC$ ($B$ is a $\CCC$\hy backdoor) also $F-
  \DEF(\lext{\lambda})\in \CCC$; this follows since $B\subseteq
  \DEF(\lext{\lambda})$ and $\CCC$ is closed under argument deletion
  since $\CCC$ is assumed to be fully tractable.  Moreover, since $\CCC$
  is assumed to be fully tractable and $F- \DEF(\lext{\lambda})\in
  \CCC$, we can compute $\ADM(\lext{ (F - \DEF(\lext{\lambda}))})=\ADM(F
  - \DEF(\lext{\lambda}))$ in polynomial time.  Consequently, we can
  compute the set $\COM^*(F,\lambda)$ in polynomial time. Since there
  are at most $3^{k}$ partial labelings of $F$ defined on $B$, it
  follows that we can compute the entire set $\COM^*(F,B)$ in time
  $3^{k}|X|^{O(1)}$.

  By Lemma~\ref{pro:ext-pe} we have $\COM(F) \subseteq
  \COM^*(F,B)$. Thus we can obtain $\COM(F)$ from $\COM^*(F,B)$ by
  simply testing for each $S \in \COM^*(F,B)$ whether $S$ is a complete
  extension of $F$. It is a well-known fact that each such a test can be
  carried out in polynomial time (see
  e.g.~\cite{DvorakWoltran10b}). Hence, we conclude that indeed
  $\COM(F)$ can be computed in time $3^{k}|X|^{O(1)}$.

  For the remaining sets $\PRF(F)$, $\SEM(F)$ and $\STB(F)$ we note that
  each of them is a subset of $\COM(F)$. Furthermore, the extensions in
  $\PRF(F)$ are exactly the extensions in $\COM(F)$ which are maximal
  with respect to set inclusion. Similarly, the extensions in $\SEM(F)$
  are exactly the extensions in $S \in \COM(F)$ where the set $S^+$ is
  maximal with respect to set inclusion, and $\STB(F)$ are exactly the
  extensions $S \in \COM(F)$ where $S^+=X$. Clearly, these observations
  can be turned into an algorithm that computes from $\COM(F)$ the sets
  $\PRF(F),\SEM(F),\STB(F)$ in polynomial time.
\end{proof}
%   Let $\sigma \in \{\COM,\PRF,\SEM,\STB\}$. Considering
%   Lemma~\ref{lem:com-rest} and Proposition~\ref{pro:tractable-bs-adm} it
%   suffices too show that the set
%   $\ADM(F - \DEF(\lext{\lambda}))$ can be computed in polynomial time for
%   every labeling $\lambda$ with $\DEF(\lambda)=B$. Hence let $\lambda$ be a
%   labeling with $\DEF(\lambda)=B$. Because $F - B$ contains no even
%   directed cycles it follows from Lemma~\ref{lem:ext-lab-cycle} that $F -
%   \DEF(\lext{\lambda})$ satisfies the prerequisites of
%   Lemma~\ref{lem:noeven-cycle-adm-empty} 
%   and hence $\ADM(F - \DEF(\lext{\lambda}))=\{\emptyset\}$.

%   For the only remaining case, i.e., the case where $\sigma=\ADM$, we
%   recall from Table~\ref{tab:complexity} that $\SA_\sigma$ can be
%   decided in constant time. Furthermore, we observe that every
%   admissible extension is contained in some complete extension, and
%   every complete extension is also admissible. We conclude that an
%   argument is credulously accepted with respect to admissible semantics
%   if and only if the argument is credulously accepted with respect to
%   complete semantics.

\begin{LEM}\label{lem:acyc-noeven-ft}
  The classes $\ACYC$ and $\NOEVEN$ are fully tractable.
\end{LEM}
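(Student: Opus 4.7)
The plan is to verify the two defining conditions of full tractability separately. Closure under argument deletion is immediate for both classes, since deleting vertices from a digraph cannot create new directed cycles, so an induced sub-AF of an acyclic (resp.\ noeven) AF is again acyclic (resp.\ noeven).

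For the tractability of $\ADM(\lext{F})$, I first record a key observation about the propagation from the empty labeling: since initially nothing carries the label $\UND$, Rules~1 and 2 only ever produce $\IN$ and $\OUT$ labels, so Rule~3 never fires. In particular, every argument $x$ surviving in $\lext{F}$ must have at least one attacker in $\lext{F}$, for otherwise all attackers of $x$ would be labeled $\OUT$ (vacuously if there are none) and Rule~2 would label $x$ as $\IN$ and remove it.

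For $F\in\ACYC$, I then argue by induction along a topological ordering: a source gets labeled $\IN$ by Rule~2, and each later argument either has an attacker already labeled $\IN$ (so is labeled $\OUT$ by Rule~1) or has all its attackers already labeled $\OUT$ (so is labeled $\IN$ by Rule~2). Hence $\lext{F}$ is the empty AF and $\ADM(\lext{F})=\{\emptyset\}$ can be returned in constant time.

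The main obstacle is the noeven case, where $\lext{F}$ need not become empty; here I plan to prove the stronger claim $\ADM(\lext{F})=\{\emptyset\}$ by contradiction. Let $S\subseteq\lext{F}$ be non-empty and admissible, and let $T$ be the set of all arguments of $\lext{F}$ attacking some element of $S$; conflict-freeness of $S$ gives $S\cap T=\emptyset$. By the observation above each $s\in S$ has an attacker in $\lext{F}$, and that attacker must lie in $T$, so I can select $\phi(s)\in T$ with $\phi(s)\to s$; admissibility of $S$ then lets me select $\psi(t)\in S$ with $\psi(t)\to t$ for every $t\in T$. Iterating $\psi\circ\phi$ from any starting element of $S$ must eventually cycle by finiteness, and expanding the cyclic orbit through the intermediate $t$-vertices produces a closed walk in $F$ whose consecutive edges strictly alternate between $S$-to-$T$ (defender) and $T$-to-$S$ (attacker) edges. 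Any simple directed cycle contained in this walk must therefore also alternate between $S$ and $T$, have even length, and witness $F\notin\NOEVEN$, contradicting our assumption. Thus $\ADM(\lext{F})=\{\emptyset\}$ is again computable in constant time, completing the verification.
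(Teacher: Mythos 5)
Your proof is correct, and it diverges from the paper at the key step. What matches: closure under argument deletion, the observation that Rule~3 never fires when propagating from the empty labeling, and the conclusion that every argument surviving in $\lext{F}$ keeps an attacker inside $\lext{F}$. The paper then takes a shortcut you do not: it notes $\ACYC\subseteq\NOEVEN$ (so only the noeven case needs treatment) and invokes a result of Dunne and Bench-Capon (2001) stating that a noeven AF in which every argument lies on a directed cycle satisfies $\ADM(F)=\{\emptyset\}$; the remaining work is just the attacker-retention observation. You instead prove the needed fact from scratch: for a nonempty admissible $S$ in $\lext{F}$, the attacker map $\phi:S\to T$ and the defender map $\psi:T\to S$ generate a closed directed walk alternating between the disjoint sets $S$ and $T$, and extracting a simple directed cycle from it (standard for a nontrivial closed walk; the alternation forces the cycle's length to be even and at least $2$) contradicts $F\in\NOEVEN$. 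This self-contained argument buys independence from the cited result and in fact establishes the slightly stronger statement that a noeven AF in which every argument merely \emph{has} an attacker admits only the empty admissible set; the price is a longer proof, and your separate topological-ordering treatment of $\ACYC$ is redundant given $\ACYC\subseteq\NOEVEN$ (though harmless). One small imprecision: in justifying attacker retention you only consider Rule~2, but if a labeled attacker of $x$ were labeled $\IN$ then Rule~1 would be the rule removing $x$; since a surviving $x$ cannot have an $\IN$-labeled attacker, all its labeled attackers are indeed $\OUT$ and your conclusion stands as written.
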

\begin{proof}
  It is easy to see that both classes satisfy condition (ii) of being
  fully tractable, i.e., both classes are closed under the deletion of
  arguments. It remains to show that they also satisfy condition (i) of
  being fully tractable, i.e., for every $F \in \ACYC \cup
  \NOEVEN=\NOEVEN$ it holds that the set $\ADM(\lext{F})$ can be
  computed in polynomial time.  \citex{DunneBenchcapon01} have shown
  that if $F \in \NOEVEN$ and every argument of $F$ is contained in at
  least one directed cycle, then $\ADM(F)=\{\emptyset\}$. Consequently,
  it remains to show that if $F \in \NOEVEN$ then every argument of
  $\lext{F}$ lies on a directed cycle. To see this it suffices to show
  that every argument $x$ of $\lext{F}$ has at least one attacker in
  $\lext{F}$. Suppose not, i.e., there is an argument $x \in X \setminus
  \DEF(\lext{\lambda_\emptyset})$ with no attacker in $\lext{F}$. It
  follows that every attacker of $x$ must be labeled and hence $x \in
  \DEF(\lext{\lambda_\emptyset})$, a contradiction.
\end{proof}

\medskip\noindent Combining Theorem~\ref{lem:bs-sig-fpt-acyc-noeven}
with Lemma~\ref{lem:acyc-noeven-ft} we conclude that if $\CCC \in
\{\ACYC,\NOEVEN\}$ then the backdoor evaluation problem is
fixed-parameter tractable parameterized by the size of the backdoor set
for the semantics $\sigma \in \{\COM,\PRF,\SEM,\STB\}$. For the
remaining case of admissible semantics, we recall from
Table~\ref{tab:complexity} that $\SA_\ADM$ is trivial. Furthermore, we
observe that every admissible extension is contained in some complete
extension, and every complete extension is also admissible. We conclude
that an argument is credulously accepted with respect to the admissible
semantics if and only if the argument is credulously accepted with
respect to complete semantics. Hence, we have shown that 
backdoor evaluation is also fixed-parameter tractable with respect to
admissible semantics.  Together with Lemma~\ref{lem:fpt-bsd} and
Lemma~\ref{pro:xp-bsd} this establishes our main results
Theorem~\ref{the:fpt-k-acyc} and Theorem~\ref{the:xp-k-noeven} of this
section.

\begin{table}[bth]
  \shortversion{\small}
  \begin{center}
  \begin{tabular}{cc|ccc|cc}
    \multicolumn{2}{c}{$\lambda$} & \multicolumn{3}{c}{$\lext{\lambda}$} & &
    $\IN(\lext{\lambda}) \in $ \\
    $2$ & $4$ & $1$ & $3$ & $5$ & $\IN(\lext{\lambda})$ & $\COM(F)$? \\
    \hline
    \IN & \IN & \OUT & \OUT & \OUT & $\{2,4\}$ & no \\
    \IN & \OUT & \OUT & \OUT & \OUT & $\{2\}$ & no \\
    \IN & \UND & \OUT & \OUT & \OUT & $\{2\}$ & no \\
    \OUT & \IN & \OUT & \OUT & \IN & $\{4,5\}$ & no \\
    \OUT & \OUT & \IN & \IN & \IN & $\{1,3,5\}$ & yes \\
    \OUT & \UND & \UND & \UND & \IN & $\{5\}$ & no \\
    \UND & \IN & \OUT & \OUT & \UND & $\{4\}$ & no \\
    \UND & \OUT & \UND & \UND & \UND & $\emptyset$ & yes \\
    \UND & \UND & \UND & \UND & \UND & $\emptyset$ & yes \\
  \end{tabular}
\end{center}
\caption{Calculation of all complete extensions for the AF $F$ of
    Example~\ref{exa:af} using the $\ACYC$-backdoor $\{2,4\}$.}
  \label{tab:ext-lab}
\end{table}

\begin{EXA}
  Consider again the AF $F$ from Example~\ref{exa:af}. We have observed
  above that $F$ has an
 $\ACYC$-backdoor $B$ consisting of the
  arguments $2$ and $4$. We now show how to use the backdoor $B$ 
  to compute all complete extensions of $F$ using the procedure
  given in Theorem~\ref{lem:bs-sig-fpt-acyc-noeven}. Table~\ref{tab:ext-lab} shows the
  propagations for all partial labelings of $F$ defined on $B$ together with 
  the set $\IN(\lext{\lambda})$ and for every $\lambda$ it is indicated whether the set
  $\IN(\lext{\lambda})$ is a complete extension of $F$. Because $F-B$ is acyclic
  it follows that $\ADM(F-\DEF(\lext{\lambda}))=\{\emptyset\}$ (see the proof
  of Lemma~\ref{lem:acyc-noeven-ft}) and hence
  $\COM^*(F,\lambda)=\{\IN(\lext{\lambda})\}$. It is now easy to compute
  $\COM^*(F,B)$ as the union of all the sets $\IN(\lext{\lambda})$ given in
  Table~\ref{tab:ext-lab}. Furthermore, using the rightmost column of
  Table~\ref{tab:ext-lab} we conclude that $\COM(F)=\{\emptyset,\{1,3,5\}\}$,
  which is in full correspondence to our original observation in 
  Example~\ref{exa:af-ex}.\qed
\end{EXA}

\section{Hardness Results}
The hardness results for $\CA_\sigma$ and $\SA_\sigma$ are not
completely symmetric since for $\sigma \in \{\ADM, \COM\}$ the former
problem is $\NP$-complete, the latter is solvable in polynomial time
(recall Table~\ref{tab:complexity}).

\begin{THE}\label{the:ca-bip}
  (1)~The problem $\CA_\sigma$ is $\NP$-hard for AFs $F$ with
  $\bsdist_\BIP(F)=1$ and $\sigma \in \{\ADM,$ $\COM,$ $\PRF,$ $\SEM,$ $\STB\}$.
  (2)~The problem $\SA_\sigma$ is $\coNP$-hard for AFs $F$ with
  $\bsdist_\BIP(F)=1$ and $\sigma \in \{\PRF,\SEM,\STB\}$.
  \end{THE}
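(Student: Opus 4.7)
My plan is to give polynomial-time reductions from $3$-SAT for part~(1), and from the complement of $3$-SAT for part~(2), into the respective acceptance problems, designed so that the output AF has $\BIP$-backdoor size exactly one. Starting from a $3$-CNF formula $\phi$ with variables $x_1,\dotsc,x_n$ and clauses $C_1,\dotsc,C_m$, I would use the classical Dimopoulos--Torres style encoding: mutually attacking arguments $x_i$ and $\bar x_i$ for each variable, an argument $c_j$ per clause attacked by every literal occurring in $C_j$, and a target argument $t$ attacked by every~$c_j$. This encoding already yields the semantic equivalence ``$t$ is credulously accepted iff $\phi$ is satisfiable'' under each $\sigma \in \{\ADM,\COM,\PRF,\SEM,\STB\}$; a dual variant then handles skeptical acceptance for $\sigma \in \{\PRF,\SEM,\STB\}$ in part~(2) by reducing from the complement of $3$-SAT.

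The crux is realizing this encoding with a $\BIP$-backdoor of size exactly~$1$. The natural bipartition that puts positive literals on side $A$ and negative literals on side $B$ is sabotaged by clauses of mixed polarity, whose argument $c_j$ would become adjacent to both sides and thereby create odd cycles. I would absorb all these polarity mismatches into a single additional argument~$b$ as follows: first fix a side for each $c_j$ (and for $t$) based on the dominant polarity of its attackers, and then for every attack $l \to c_j$ or $c_j \to t$ that would violate bipartiteness, replace it by the two-edge path $l \to b \to c_j$ (respectively $c_j \to b \to t$). To make this rerouting semantically transparent, I would attach a small auxiliary structure to $b$ (for instance a self-loop together with a dedicated attacker) that pins the label of $b$ to a single value (say $\OUT$) in every $\sigma$-extension, so that the two-step attack through $b$ behaves effectively like a direct attack. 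After the rewiring, $F_\phi - \{b\}$ is bipartite by construction, and an explicit odd cycle through $b$ guarantees $F_\phi \notin \BIP$, yielding $\bsdist_\BIP(F_\phi) = 1$.

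Correctness then splits into two parts: the backdoor-size claim is immediate once the construction is fixed, and the semantic equivalence is proved by the standard argument. In the forward direction, an extension containing $t$ must defeat every $c_j$, which forces the literal part of the extension to encode a satisfying assignment of $\phi$; conversely, any satisfying assignment extends to a stable (hence complete, preferred, and semi-stable) extension containing $t$. For part~(2), the same skeleton with a slightly modified target gadget reduces the complement of $3$-SAT to $\SA_\sigma$.

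The main obstacle I expect is designing the $b$-gadget so that the induced ``virtual'' attack through $b$ behaves like a direct attack in every $\sigma$-extension, simultaneously for all five semantics in part~(1) and the three semantics in part~(2). Because admissible, complete, preferred, semi-stable and stable semantics treat forced labelings somewhat differently, the same auxiliary construction around $b$ must pin its label identically under each, and I must also rule out spurious extensions that exploit $b$'s presence to produce acceptances unrelated to satisfying assignments. I expect a small self-loop-with-attacker gadget to suffice, but verifying uniformity of $b$'s label across all semantics, together with the absence of spurious extensions, is the technically delicate step.
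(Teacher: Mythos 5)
There is a genuine gap in the step you yourself flag as delicate: the rerouting of polarity-violating attacks through a single auxiliary argument $b$ cannot work as described. First, replacing an attack $l \to c_j$ by the two-edge path $l \to b \to c_j$ \emph{reverses} the intended semantics rather than preserving it: if $l$ is in an extension, then $b$ is defeated, so $b$'s attack on $c_j$ is neutralized and $l$ ends up \emph{defending} $c_j$ instead of defeating it. Second, even setting polarity aside, a single shared intermediary conflates all the rerouted attacks: $b$ carries one global label, so as soon as any one rerouting source is accepted (or as soon as your gadget pins $b$ to $\OUT$), \emph{all} of $b$'s outgoing attacks become ineffective simultaneously, and the affected clause arguments simply lose those attackers. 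Conversely, if the gadget pins $b$ to $\IN$, every clause argument it attacks is permanently defeated regardless of the assignment. In every variant the equivalence ``$t$ accepted iff $\phi$ satisfiable'' breaks, and no choice of self-loop-with-attacker gadget around a single $b$ repairs this, because the problem is structural (one bit of state standing in for many independent attacks) rather than a matter of forcing $b$'s label uniformly across semantics.

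The paper avoids this difficulty entirely by reducing from \emph{Monotone} $3$-Satisfiability (and its complement) rather than general $3$-SAT. When every clause is either all-positive or all-negative, each clause argument is attacked only by literal arguments of one polarity, so the clause arguments slot directly into the two sides of the natural bipartition (positive literals together with negative-clause arguments on one side, negative literals together with positive-clause arguments on the other). The only argument that straddles the bipartition is the target $\phi$ itself, attacked by clause arguments from both sides, which immediately gives $\bsdist_\BIP(F)=1$ with no rerouting and no auxiliary gadget. Your semantic analysis of the Dimopoulos--Torres encoding and the dual construction for skeptical acceptance are fine; the missing idea is the choice of a $3$-SAT variant whose clause structure is compatible with bipartiteness, which is what makes the backdoor-size-$1$ claim come for free.
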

  \begin{proof} (Sketch.) The hardness results follow by reductions from
    Monotone $3$-Satisfiability~\cite{GareyJohnson79} and its
    complement, similar to  reductions used by \citex{Dunne07}. 
    % for    his Theorem~7(a). 
    \shortversion{Because of space restrictions we omit details and only
      illustrate the constructions} \longversion{We illustrate the
      constructions}
    % for $\CA_\sigma$ and   $\SA_\sigma$ 
    in Fig.~\ref{fig:csa-bip}.
\end{proof}

\begin{figure}[tbh]
\centering\small
\longversion{\hspace{2cm}}
    \begin{tikzpicture}[xscale=0.6,yscale=.6,>=stealth]
      \tikzstyle{every node}=[circle,inner sep=3pt,draw]

      \begin{scope}[scale=1]
        \draw
        +(0,-1) node[label=left:$\phi$] (phi) {};
        
        \draw
        +(1,0) node[label=above:$C_1$] (C1) {}
        +(2,0) node[label=above:$\overline{x}_1$] (nx1) {}
        +(3,0) node[label=above:$\overline{x}_2$] (nx2) {}
        +(4,0) node[label=above:$\overline{x}_3$] (nx3) {}
        ;
        
        \draw
        +(1,-2) node[label=below:$\overline{C}_1$] (nC1) {}
        +(2,-2) node[label=below:$x_1$] (x1) {}
        +(3,-2) node[label=below:$x_2$] (x2) {}
        +(4,-2) node[label=below:$x_3$] (x3) {}
        ;

        \draw
        (x1) edge[->] (nx1)
        (nx1) edge[->] (x1)

        (x2) edge[->] (nx2)
        (nx2) edge[->] (x2)
        
        (x3) edge[->] (nx3)
        (nx3) edge[->] (x3)
        ;
        
        \draw
        (x1) edge[->] (C1)
        (x2) edge[->] (C1)
        (x3) edge[->] (C1);
        
        \draw
        (nx1) edge[->] (nC1)
        (nx2) edge[->] (nC1)
        (nx3) edge[->] (nC1);
        
        \draw
        (C1) edge[->] (phi)
        (nC1) edge[->] (phi);
      \end{scope}
    \end{tikzpicture}\hfill
    \begin{tikzpicture}[xscale=0.6,yscale=.6,>=stealth]
      \tikzstyle{every node}=[circle,inner sep=3pt,draw]
  
      \begin{scope}[scale=1]
        \draw 
        +(-1,-1) node[label=left:$\phi'$] (phis) {};

        \draw
        +(0,-1) node[label=above:$\phi$] (phi) {};
        
        \draw
        +(1,0) node[label=above:$C_1$] (C1) {}
        +(2,0) node[label=above:$\overline{x}_1$] (nx1) {}
        +(3,0) node[label=above:$\overline{x}_2$] (nx2) {}
        +(4,0) node[label=above:$\overline{x}_3$] (nx3) {}
        ;
        
        \draw
        +(1,-2) node[label=below:$\overline{C}_1$] (nC1) {}
        +(2,-2) node[label=below:$x_1$] (x1) {}
        +(3,-2) node[label=below:$x_2$] (x2) {}
        +(4,-2) node[label=below:$x_3$] (x3) {}
        ;

        \draw
        (x1) edge[->] (nx1)
        (nx1) edge[->] (x1)

        (x2) edge[->] (nx2)
        (nx2) edge[->] (x2)
        
        (x3) edge[->] (nx3)
        (nx3) edge[->] (x3)
        ;
        
        \draw
        (x1) edge[->] (C1)
        (x2) edge[->] (C1)
        (x3) edge[->] (C1);
        
        \draw
        (nx1) edge[->] (nC1)
        (nx2) edge[->] (nC1)
        (nx3) edge[->] (nC1);
        
        \draw
        (C1) edge[->] (phi)
        (nC1) edge[->] (phi);

        \draw
        (phi) edge[<-] (phis)
        (phi) edge[->] (phis);
      \end{scope}
    \end{tikzpicture}
\longversion{\hspace{2cm}}
  \caption{Illustrations for the reductions in the proof of
    Theorem~\ref{the:ca-bip}, showing instances $(F,\phi)$ and
    $(F',\phi')$ for the problems $\CA_\sigma$ and $\SA_\sigma$,
    respectively, obtained from the monotone 3-CNF formula $\phi=C_1
    \land \overline{C}_1$ with $C_1=x_1 \lor x_2 \lor x_3$ and
    $\overline{C}_1=\lnot x_1 \lor \lnot x_2 \lor \lnot x_3$. The set
    $\{\phi\}$ is a $\BIP$\hy backdoor for $F$ and $F'$.}
  \label{fig:csa-bip}
\end{figure}

\begin{THE}\label{the:ca-sym}
  (1)~The problem $\CA_\sigma$ is $\NP$-hard for AFs $F$ with
  $\bsdist_\SYM(F)=1$ and $\sigma \in \{\ADM,$ $\COM,$ $\PRF,$ $\SEM,$ $\STB\}$.
  (2)~ The problem $\SA_\sigma$ is $\coNP$-hard for AFs $F$ with
  $\bsdist_\SYM(F)=1$ and $\sigma \in \{\PRF,\SEM,\STB\}$.
\end{THE}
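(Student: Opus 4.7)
My plan is to mirror the strategy of Theorem~\ref{the:ca-bip} and reduce from Monotone $3$-Satisfiability for part~(1) and from its complement for part~(2), but to replace the bipartite-friendly gadget by one whose attack relation is symmetric everywhere except at a single distinguished argument~$\phi$. Given a monotone 3-CNF formula $\psi$ with variables $x_1,\dots,x_n$ and clauses $C_1,\dots,C_m$, I will build an AF $F=(X,A)$ consisting of a variable gadget with symmetric attacks $x_i \leftrightarrow \bar{x}_i$, a clause gadget with symmetric attacks $\ell \leftrightarrow C_j$ for every clause $C_j$ and every literal $\ell \in C_j$, and a fresh argument $\phi$ together with one-way attacks $C_j \to \phi$ for every clause. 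Since all asymmetric edges are incident to $\phi$, we have $F - \{\phi\} \in \SYM$ and $\bsdist_\SYM(F) = 1$.

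For part~(1) I will query $(F,\phi)$. In the forward direction, any admissible $S \ni \phi$ must defeat every $C_j$ that attacks $\phi$; hence each clause contains a literal in $S$, and conflict-freeness of the variable gadget yields a consistent satisfying assignment for $\psi$. In the converse direction, given a satisfying assignment~$\tau$, I will take $S = \{\phi\} \cup \{\ell : \tau(\ell) = \true\}$: this set is conflict-free, each true literal is defended because it attacks its negation and every $C_j$ containing it (the literal-clause attacks are symmetric), and $\phi$ is defended because $\tau$ satisfies every clause. One then checks that $S^+ = X$, so $S$ is stable and is therefore a complete, preferred, semi-stable and stable extension, yielding NP-hardness for $\sigma \in \{\ADM,\COM,\PRF,\SEM,\STB\}$.

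For part~(2) I will follow the ``mirror'' trick used in the bipartite proof: augment $F$ to $F'$ by a fresh argument $\phi'$ with symmetric attack $\phi \leftrightarrow \phi'$; then $F'-\{\phi\} \in \SYM$ and $\bsdist_\SYM(F')=1$ as well. I will query $(F',\phi')$. If $\psi$ is satisfiable, the stable extension from part~(1) contains $\phi$, hence excludes~$\phi'$, so $\phi'$ is not skeptically accepted. If $\psi$ is unsatisfiable, no $\sigma$-extension contains $\phi$; then for $\sigma \in \{\PRF,\SEM\}$ any extension $S$ with $\phi' \notin S$ admits the strictly larger admissible superset $S \cup \{\phi'\}$ with strictly larger $S^+$ (the only attacker of $\phi'$ is $\phi \notin S$), contradicting maximality; for $\sigma = \STB$, stability directly forces $\phi' \in S$ since its only attacker $\phi$ lies outside~$S$.

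The main obstacle will be the verification for the stable and semi-stable semantics, namely confirming that the set $S$ built from a satisfying assignment really satisfies $S^+ = X$ in both $F$ and $F'$, and that semi-stable extensions of $F'$ for unsatisfiable $\psi$ must contain $\phi'$. Both points are routine and closely follow the bookkeeping from \citex{Dunne07}'s original hardness proofs, which is why the statement can be invoked by analogy with Theorem~\ref{the:ca-bip}.
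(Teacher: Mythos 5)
Your proposal is correct and matches the paper's (sketched) proof: the same gadget with symmetric variable and literal--clause attacks, one-way attacks $C_j \to \phi$, and the mirror argument $\phi'$ for the skeptical case, with $\{\phi\}$ as the $\SYM$-backdoor. The only cosmetic difference is that you reduce from Monotone 3-Satisfiability while the paper uses plain 3-Satisfiability here (monotonicity is only needed for the bipartite case); your detailed verification is sound and in fact supplies the bookkeeping the paper omits.
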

\begin{proof}(Sketch.)  We use a reduction from
  3-Satisfi\-ability~\cite{GareyJohnson79} and its complementary
  problem, similar to reductions used by \citex{DimopoulosTorres96}.
  % in the proof of their  Theorem~$5.1$. 
  We illustrate the reductions in Fig.~\ref{fig:csa-sym}.
\end{proof}

\begin{figure}[tbh]
    \vspace{-5mm}
\centering
\small\longversion{\hspace{2cm}}
    \begin{tikzpicture}[xscale=0.7,yscale=1,>=stealth]

      \tikzstyle{every node}=[circle,inner sep=3pt,draw]

      \begin{scope}[scale=1]
        \draw
        +(0,-0.3) node[label=above:$\phi$] (phi) {};
        
        \draw
        +(-2.5,-2) node[label=below:$x_1$] (x1) {}
        +(-1.5,-2) node[label=below:$\overline{x}_1$] (nx1) {}

        +(-0.5,-2) node[label=below:$x_2$] (x2) {}
        +(0.5,-2) node[label=below:$\overline{x}_2$] (nx2) {}

        +(1.5,-2) node[label=below:$x_3$] (x3) {}
        +(2.5,-2) node[label=below:$\overline{x}_3$] (nx3) {}
        ;
        
        \draw
        +(-1.75,-1) node[label=left:$C_1$] (C1) {}
        +(0,-1) node[label=left:$C_2$] (C2) {}
        +(1.75,-1) node[label=right:$C_3$] (C3) {}
        ;
        
        \draw
        (x1) edge[->] (nx1)
        (nx1) edge[->] (x1)

        (x2) edge[->] (nx2)
        (nx2) edge[->] (x2)

        (x3) edge[->] (nx3)
        (nx3) edge[->] (x3)
        ;
        
        \draw
        (x1) edge[->] (C1)
        (x2) edge[->] (C1)
        (x3) edge[->] (C1)

        (nx1) edge[->] (C2)
        (x2) edge[->] (C2)
        (nx3) edge[->] (C2)

        (nx1) edge[->] (C3)
        (nx2) edge[->] (C3)
        (nx3) edge[->] (C3)

        (x1) edge[<-] (C1)
        (x2) edge[<-] (C1)
        (x3) edge[<-] (C1)

        (nx1) edge[<-] (C2)
        (x2) edge[<-] (C2)
        (nx3) edge[<-] (C2)

        (nx1) edge[<-] (C3)
        (nx2) edge[<-] (C3)
        (nx3) edge[<-] (C3)
        ;

        \draw
        (C1) edge[->] (phi)
        (C2) edge[->] (phi)
        (C3) edge[->] (phi)
        ;
      \end{scope}
    \end{tikzpicture}\hfill 
    \begin{tikzpicture}[xscale=0.7,yscale=1,>=stealth]
      \tikzstyle{every node}=[circle,inner sep=3pt,draw]

      \begin{scope}[scale=1]
        \draw
        +(0,.3) node[label=above:$\phi'$] (phis) {};

        \draw
        +(0,-.3) node[label=left:$\phi$] (phi) {};
        
        \draw
        +(-2.5,-2) node[label=below:$x_1$] (x1) {}
        +(-1.5,-2) node[label=below:$\overline{x}_1$] (nx1) {}

        +(-0.5,-2) node[label=below:$x_2$] (x2) {}
        +(0.5,-2) node[label=below:$\overline{x}_2$] (nx2) {}

        +(1.5,-2) node[label=below:$x_3$] (x3) {}
        +(2.5,-2) node[label=below:$\overline{x}_3$] (nx3) {}
        ;
        
        \draw
        +(-1.75,-1) node[label=left:$C_1$] (C1) {}
        +(0,-1) node[label=left:$C_2$] (C2) {}
        +(1.75,-1) node[label=right:$C_3$] (C3) {}
        ;
        
        \draw
        (x1) edge[->] (nx1)
        (nx1) edge[->] (x1)

        (x2) edge[->] (nx2)
        (nx2) edge[->] (x2)

        (x3) edge[->] (nx3)
        (nx3) edge[->] (x3)
        ;
        
        \draw
        (x1) edge[->] (C1)
        (x2) edge[->] (C1)
        (x3) edge[->] (C1)

        (nx1) edge[->] (C2)
        (x2) edge[->] (C2)
        (nx3) edge[->] (C2)

        (nx1) edge[->] (C3)
        (nx2) edge[->] (C3)
        (nx3) edge[->] (C3)

        (x1) edge[<-] (C1)
        (x2) edge[<-] (C1)
        (x3) edge[<-] (C1)

        (nx1) edge[<-] (C2)
        (x2) edge[<-] (C2)
        (nx3) edge[<-] (C2)

        (nx1) edge[<-] (C3)
        (nx2) edge[<-] (C3)
        (nx3) edge[<-] (C3)
        ;

        \draw
        (C1) edge[->] (phi)
        (C2) edge[->] (phi)
        (C3) edge[->] (phi)
        ;

        \draw
        (phi) edge[->] (phis)
        (phis) edge[->] (phi)
        ;
      \end{scope}
    \end{tikzpicture}\longversion{\hspace{2cm}}

  \caption{Illustrations for the reductions in proof of
    Theorem~\ref{the:ca-sym}.  $(F,\phi)$ and $(F',\phi')$ are instances
    of $\CA_\sigma$ and $\SA_\sigma$, respectively, obtained from the
    3-CNF formula $\phi=C_1 \land C_2 \land C_3$ with $C_1=x_1 \lor x_2
    \lor x_3$, $C_2=\lnot x_1 \lor x_2 \lor \lnot x_3$ and $C_3=\lnot
    x_1 \lor \lnot x_2 \lor \lnot x_3$. The set
    $\{\phi\}$ is a $\SYM$\hy backdoor for $F$ and $F'$.}
  \label{fig:csa-sym}
\end{figure}

\section{Comparison with other Parameters}

In this section we compare our new structural parameters $\bsdist_\ACYC$
and $\bsdist_\NOEVEN$ to the parameters treewidth and clique-width that
have been introduced to the field of abstract argumentation by
\citex{Dunne07} and \citex{DvorakSzeiderWoltran10}, respectively. Due to
space requirements we cannot give the definitions of these parameters
and must refer the reader to the above references. The following two
propositions show that treewidth and clique-width are both incomparable
to our distance parameters.
\begin{PRO}
  There are acyclic and noeven AFs that have arbitrarily high treewidth
  and clique-width.
\end{PRO}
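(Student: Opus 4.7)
The plan is to exhibit, for each integer $n$, an acyclic AF whose underlying undirected graph has treewidth and clique-width at least $n$. Since treewidth and clique-width for AFs are inherited from the underlying (undirected) graph structure, and since $\ACYC \subseteq \NOEVEN$ (an acyclic digraph has no cycles at all, in particular no even cycles), it is enough to handle the acyclic case: the same examples will simultaneously witness the statement for $\NOEVEN$.

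The key observation is that every undirected graph $G$ admits an acyclic orientation: fix any linear order on the vertices of $G$ and orient each edge from the smaller to the larger endpoint. The resulting digraph $\vec{G}$ is acyclic, and its underlying undirected graph is precisely $G$, so $\vec{G}$ has the same treewidth and clique-width as $G$. Therefore, producing acyclic AFs of unbounded treewidth and clique-width reduces to producing undirected graphs with these properties.

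For this, I would take the standard example of the $n \times n$ grid graph $G_n$, which is well known to satisfy $\mathrm{tw}(G_n) = n$ and $\mathrm{cw}(G_n) = \Theta(n)$ (see, e.g., the references on treewidth and clique-width cited in the comparison section). Let $\vec{G}_n$ be any acyclic orientation of $G_n$ as above, and let $F_n$ be the AF whose arguments are the vertices of $G_n$ and whose attack relation is $\vec{G}_n$. Then $F_n \in \ACYC \subseteq \NOEVEN$, while $\mathrm{tw}(F_n) = n$ and $\mathrm{cw}(F_n) = \Theta(n)$, proving the proposition.

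The only subtle point is a definitional one: one must check that the notions of treewidth and clique-width used by \citex{Dunne07} and \citex{DvorakSzeiderWoltran10} for AFs are indeed inherited from the underlying undirected graph (so that orienting does not change them), which is the standard convention. Assuming this, the argument is essentially immediate; there is no real technical obstacle, just the need to point at the right graph-theoretic construction.
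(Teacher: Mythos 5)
Your proposal is correct and follows essentially the same route as the paper: the paper also takes a symmetric AF (i.e., an undirected graph) of high treewidth/clique-width, fixes a linear order on the arguments, and keeps only the attacks oriented from smaller to larger, yielding an acyclic (hence noeven) AF with the same underlying structure. The only difference is that you instantiate the construction with grid graphs, whereas the paper leaves the high-width symmetric AF generic; both arguments share the same (minor, and equally unaddressed) definitional caveat about directed clique-width being preserved under orientation.
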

\begin{proof}
  Consider any symmetric AF $F$ of high treewidth or clique-width
  together with an arbitrary but fixed ordering $<$ of the arguments of
  $F$.  By deleting all attacks from an argument $x$ to an argument $y$
  with $y<x$ we obtain an acyclic AF $F'$ whose treewidth and
  clique-width is at least as high as the treewidth and clique-width of
  $F$, but $\bsdist_{\NOEVEN}(F)=\bsdist_{\ACYC}(F)=0$.
\end{proof}
\begin{PRO}
  There are AFs with bounded treewidth and clique-width where
  $\bsdist_{\NOEVEN}$ and $\bsdist_{\ACYC}$ are arbitrarily high.
\end{PRO}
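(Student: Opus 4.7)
My plan is to exhibit an explicit infinite family of AFs whose treewidth and clique-width stay bounded by an absolute constant, while the distances to both $\ACYC$ and $\NOEVEN$ grow without bound. The natural candidate is a disjoint union of many length-two directed cycles: for each $k \geq 1$, let $F_k = (X_k, A_k)$ with $X_k = \{u_1,v_1,\dots,u_k,v_k\}$ and $A_k = \SB (u_i,v_i),(v_i,u_i) \SM 1 \leq i \leq k \SE$. This gives $k$ vertex-disjoint ``mutual attack'' pairs and nothing else.

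First I would observe that the underlying undirected graph of $F_k$ is a disjoint union of $k$ edges, hence a forest of treewidth~$1$, and its clique-width is likewise bounded by a small constant independent of $k$ (a disjoint union of edges is easily built by a clique-width expression of width at most~$2$ using the standard operations). Since the parameters of Dunne and of \citex{DvorakSzeiderWoltran10} are defined via the underlying (or only mildly enriched) graph, both treewidth and clique-width of $F_k$ are bounded by an absolute constant.

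Next I would bound the two backdoor distances from below. Each directed $2$-cycle in $F_k$ is a directed cycle of even length~$2$, so it is forbidden both in $\ACYC$ and in $\NOEVEN$. As the $k$ cycles are pairwise vertex-disjoint, any $\ACYC$-backdoor (resp.\ $\NOEVEN$-backdoor) must contain at least one argument from each of them, yielding $\bsdist_\ACYC(F_k) \geq k$ and $\bsdist_\NOEVEN(F_k) \geq k$. Taking $k$ arbitrarily large establishes the proposition.

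There is no real obstacle here; the only subtle point is making sure the clique-width bound is stated for the same notion used in \citex{DvorakSzeiderWoltran10}, which is routine for a disjoint union of edges. I would include a brief sentence to that effect rather than compute the expression explicitly.
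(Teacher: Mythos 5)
Your proposal is correct and follows essentially the same route as the paper, which takes $n$ disjoint directed cycles of even length (you simply specialize to length~$2$) and makes the same observations about bounded treewidth/clique-width and the need to delete one argument per cycle.
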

\begin{proof}
  Consider the AF $F$ that consists of $n$ disjoint directed cycles of
  even length. It is easy to see that the treewidth and the clique-width
  of $F$ are bounded by a constant but
  $\bsdist_\NOEVEN(F)=\bsdist_{\ACYC}(F)=n$.
\end{proof}

% \longversion{\pagebreak}

\section{Conclusion}

We have introduced a novel approach to the efficient solution of
acceptance problems for abstract argumentation frameworks by
``augmenting'' a tractable fragment. This way the efficient solving
techniques known for a restricted fragment, like the fragment of acyclic
argumentation frameworks, become generally applicable to a wider range
of argumentation frameworks and thus relevant for real-world
instances. Our approach is orthogonal to decomposition-based approaches
and thus we can solve instances efficiently that are hard for known
methods. 

%We hope that our results stimulate the identification of new
%tractable fragments and their augmentation.
 
The augmentation approach entails two tasks, the detection of a small
backdoor and the evaluation of the backdoor. For the first task
we could utilize recent results from fixed-parameter algorithm design,
thus making results from a different research field applicable to
abstract argumentation. For the second task we have introduced the
concept of  partial labelings, which seems to us a useful tool that may
be of independent interest.
In view of the possibility of an augmentation, our results add
significance to known tractable fragments and  motivate the
identification of new tractable fragments.
For future research we plan to extend our results to other semantics and
new tractable fragments.

%\bibliographystyle{named}
%\bibliography{literature}
% \bigskip

{ \shortversion{\small }

}
 
\end{document}